\newtheorem{dfn}{Definition}
\newtheorem{pbm}{Problem}
\newtheorem{thm}{Theorem}
\newtheorem{crl}{Corollary}
\title{Efficient Incremental Belief Updates\\Using\\Weighted Virtual Observations}
\author{David Tolpin,\\Offtopia and Ben-Gurion University of the Negev\\ david.tolpin@gmail.com}
\begin{document}

\maketitle
\begin{abstract}
	We present an algorithmic solution to the problem of
	incremental belief updating in the context of Monte Carlo
	inference in Bayesian statistical models represented by
	probabilistic programs.  Given a model and an inferred
	posterior, approximated by samples, our solution constructs
	a set of weighted observations to condition the model such that
	inference would result in the same posterior. This problem
	arises, for example, in multi-level modelling, incremental
	inference, inference in presence of privacy constraints.
	According to the solution, first, a set of virtual
	observations is selected, then, observation weights are found
	through a computationally efficient optimization procedure
	such that the posterior resulting from the weighted virtual
	observations coincides with or closely approximates the
	original posterior.  We implement and apply the solution to a
	number of didactic examples and case studies, showing
	efficiency and robustness of our approach.  The provided
	reference implementation is agnostic to the probabilistic
	programming language and inference algorithm used, and can
	be straightforwardly applied to most mainstream
	probabilistic programming environments.
\end{abstract}

\section{Introduction}

Probabilistic programming aims at enabling tractable inference in
elaborate Bayesian statistical models, represented as programs.
Generality of model structure makes Monte Carlo inference
algorithms a popular choice~\citep{WVM14}. However, approximating the
posterior by a set of Monte Carlo samples breaks
incremental belief updating. Indeed, a probabilistic program
can be viewed as a function $p(x, y)$ computing the joint probability of
the latent variables $x$ and the observables $y$, and is often
factorized as $p(x)p(y|x)$. $p(x, y)$, along with observations
$\pmb{y}^*$, can be passed to an inference algorithm.
However, the posterior $p(x, y|\pmb{y}^*)=p(x|\pmb{y}^*)p(y|x)$ does not
have a similar convenient functional form; instead, $p(x|\pmb{y}^*)$ is
approximated by a set of Monte Carlo samples.

The need for a representation of the posterior suitable for
further belief updates often arises in applications. 
\begin{itemize}
\item In an online setting, such as stock price monitoring~\citep{RSM+20},
observations arrive gradually over time. The posterior belief
about latent variables should be updated as the observations
arrive, and all past observations should affect the up-to-date
posterior belief, without reprocessing the whole history upon
arrival of each new piece of evidence.
\item Distributed autonomous systems, such as multi-robot
environments~\citep{GSP22}, involve communication among parties.
Large amounts of raw information would put burden on both
communication channels and computational capacities of each
party. Instead, the parties can share their posterior beliefs
about latent parameters, and use other parties' beliefs for
inference.
\item Privacy considerations, e.g. in health care~\citep{XC21},
may preclude sharing observations obtained in one entity for
inference over parameters of another entity.  Instead of sharing
observations, posterior beliefs about latent parameters can be
shared. 
\end{itemize}
One widespread way to address this need is \textit{empirical
Bayes}~\citep{R51,C85,R92}. Empirical Bayes approximates the
inferred posterior by a parametric distribution fit to the set
of Monte Carlo samples of $x$, which serves as
the prior for the next belief update. However, empirical
Bayes requires guessing or selecting an appropriate parametric
form for approximation of the posterior. Except in trivial
cases, parametric approximation of the posterior introduces
inaccuracies.  Although efficiency of empirical Bayes was
demonstrated in a number of settings~\citep{R92}, in general,
empirical Bayes may result in misspecified models and
biased inference outcomes.

Instead of looking for a parametric approximation of the
posterior, we propose to condition the model (appropriately
modified) on a small set of weighted virtual observations, such
that the inference on this set results in the same posterior as
the inference on the original set of observations, but at a much
lower computational cost.  This is related to Bayesian
coresets~\citep{HCB16,ZKK+21}, although with essential
differences, discussed in Section~\ref{sec:related}.  The
weighted virtual observations are obtained algorithmically using
an efficient optimization procedure in a way that is agnostic
either to the structure, representation of the statistical model or to the
inference algorithm.

This work brings the following contributions:
\begin{enumerate}
\item Formulation and theoretical analysis of the problem of
finding the observation set given the model and the posterior.

\item A theoretically justified algorithm for constructing a
weighted virtual observation set.

\item A reference implementation of the solution, applied to a
range of didactic examples and case studies.
\end{enumerate}

We proceed as follows. Section~\ref{sec:problem} formalizes the problem
and analyses its hardness in the general and special cases.
Section~\ref{sec:solution} presents our solution to the problem
of efficient incremental belief updates through weighted virtual
observations, the main contribution of this work.
Section~\ref{sec:studies} evaluates the solution on a range of case
studies. Section~\ref{sec:related} discusses related work, and
Section~\ref{sec:conclusion} concludes the paper. Algorithm
implementations in the Julia programming language~\citep{BEK+17} and Julia
Pluto notebooks for examples and case studies are at
\url{https://bitbucket.org/dtolpin/wvo-supp/}.

\section{Problem}
\label{sec:problem}

We now state the problem formally. We start with the general
problem, and show that the general problem is hard. We then
identify a special case in which the problem admits an efficient
solution. Based on these observations, we state a modified
problem, for which a computationally feasible solution is
derived and evaluated in the rest of the paper.

\subsection{General Problem}

\begin{pbm}
\label{pbm:general} A statistical model is defined
by its join probability (mass or density) function $p(x,
y)$, where $x$ is the latent variable, and $y$ is the
observable variable (both can be vectors). The model is
conditioned on a set of observations, and a set
$\pmb{x}^*$ of $S$ i.i.d. samples from the posterior is
obtained.

Given $p(x, y)$ and $\pmb{x}^*$, what is the
most likely set of observations on which the model was
conditioned?
\end{pbm}

\subsection{Hardness of General Case}

To justify our quest for an efficient approximate solution for
Problem~\ref{pbm:general}, we first show that reconstructing
observations from the model and the posterior is computationally
hard.

\begin{thm}
\label{thm:nphard}
The problem of observation reconstruction is NP-hard. 
\end{thm}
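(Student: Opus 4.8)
The plan is to prove NP-hardness by a polynomial-time reduction from a classical NP-complete problem whose combinatorial core is a subset selection; I would use SUBSET SUM, or, to control the posterior precision as well, its cardinality-constrained variant $k$-SUBSET SUM. The decision version of observation reconstruction that I would target is: given $p(x,y)$, the samples $\pmb{x}^*$, and a threshold $L$, does there exist an admissible observation set $\pmb{y}$ whose induced posterior assigns the samples a (log-)likelihood at least $L$? By Bayes' rule with a uniform prior over admissible observation sets, the ``most likely'' set is the maximizer of $\sum_{s} \log p(x_s^*\mid \pmb{y})$, so this is exactly the optimization underlying Problem~\ref{pbm:general}.

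First I would fix the model to be a conjugate exponential-family model in which the posterior is a closed-form function of a low-dimensional, \emph{additive} sufficient statistic of the selected observations. Concretely, take a Gaussian location model: a prior $\theta \sim \mathcal N(\mu_0, \sigma_0^2)$ and a pool of $n$ distinct candidate observations, where including token $i$ contributes the datum $a_i$ under a unit-variance likelihood. An observation set is then a subset $S \subseteq \{1,\dots,n\}$, and the posterior is $\mathcal N(\mu_S, \sigma_S^2)$ with $\sigma_S^{-2} = \sigma_0^{-2} + |S|$ and $\mu_S \sigma_S^{-2} = \mu_0\sigma_0^{-2} + \sum_{i\in S} a_i$. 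Thus the posterior precision encodes the cardinality $|S|$ and, given the precision, the posterior mean encodes the subset sum $\sum_{i\in S} a_i$. Given a target $t$ (and cardinality $k$), I would set $\mu_0,\sigma_0$ so that matching a designated target pair $(\mu^*, \sigma_*^2)$ is equivalent to $|S| = k$ and $\sum_{i\in S} a_i = t$, and I would emit a short sample set $\pmb{x}^*$ whose empirical mean and variance equal $(\mu^*, \sigma_*^2)$.

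The correctness argument rests on the observation that, for fixed samples $\pmb{x}^*$, the map $(\mu,\sigma^2)\mapsto \sum_s \log \mathcal N(x_s^*;\mu,\sigma^2)$ is maximized uniquely at the sample mean and variance, with a value $L^*$ computable in polynomial time. Hence no subset can exceed $L^*$, and a subset attains $L^*$ \emph{iff} $(\mu_S,\sigma_S^2)=(\mu^*,\sigma_*^2)$, i.e.\ iff $S$ solves the SUBSET SUM instance. Taking the threshold $L = L^*$ therefore makes the reconstruction decision problem answer the SUBSET SUM instance, and the whole construction is clearly polynomial, establishing NP-hardness.

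The step I expect to be the main obstacle is making the correspondence \emph{exact} rather than approximate: because we are handed finite posterior samples rather than the true posterior, I must guarantee that only the genuinely matching subset attains the optimum and that near-misses are strictly separated from $L^*$. The uniqueness of the Gaussian maximum-likelihood fit handles the former, and choosing integer-valued targets together with a discrete latent variable---so that the posterior is a probability mass function and the samples pin it down exactly---would handle the latter; I would also need to confirm that the additive sufficient statistics faithfully realize both the sum and, where required, the cardinality constraint, which is why reducing from $k$-SUBSET SUM (whose NP-hardness I would invoke directly) is cleaner than from plain SUBSET SUM. As an alternative encoding, since the model is an arbitrary probabilistic program, one could instead reduce from $3$-SAT by letting the program multiply the likelihood by an indicator that the binary observation vector satisfies the formula, so that a positive-likelihood observation set exists iff the formula is satisfiable.
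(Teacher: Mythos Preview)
Your closing alternative---reduce from (3-)SAT by letting the likelihood be an indicator that the Boolean observation vector satisfies the formula---is essentially the paper's proof. The paper takes $x=\chi_1$, $y=(\chi_2,\dots,\chi_n)$, a uniform prior on $x$, and $p(y\mid x)\propto \mathbf{1}[\Phi(x,y)]$; it then queries the reconstruction oracle on the three posteriors $p(x{=}T\mid y)=1$, $p(x{=}F\mid y)=1$, and $p(x{=}T\mid y)=0.5$, so that satisfiability holds iff at least one query returns an observation. Because the latent is two-valued, a finite sample set represents the posterior exactly, which sidesteps the approximation issue you worry about. So your proposal \emph{does} contain the paper's argument, but only as a one-sentence fallback.

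Your primary route via SUBSET SUM and a Gaussian location model is a genuinely different idea---encode the combinatorics in an additive sufficient statistic rather than in a hard-coded indicator---but as written it has a gap you have not closed. The dilemma is the domain of $y$. If $y$ ranges over all of $\mathbb{R}$, then for \emph{any} target $(\mu^*,\sigma_*^2)$ there is a real-valued observation multiset of size $k$ achieving it exactly, so the reconstruction problem is trivial and the reduction collapses. If instead you restrict $y$ to the finite pool $\{a_1,\dots,a_n\}$ (as your ``token'' language suggests), the Normal--Normal conjugate update you invoke no longer holds: normalizing $p(y\mid\theta)\propto \exp(-(y-\theta)^2/2)$ over a finite support introduces a $\theta$-dependent partition function, and the posterior is not $\mathcal N(\mu_S,\sigma_S^2)$ with the clean formulas you state. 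You gesture at ``a discrete latent variable'' as a fix but do not give a concrete model in which both (i) the posterior depends on the observations only through $\bigl(|S|,\sum_{i\in S}a_i\bigr)$ and (ii) the observation domain is the finite pool. You would also need to specify whether the oracle controls the observation-set size and whether repetitions are allowed, since both affect whether the cardinality constraint of $k$-SUBSET SUM is actually enforced. The SAT reduction avoids all of these issues at once, which is precisely what the paper's construction buys.
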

\begin{proof}
We prove by reduction from SAT. Let $\Phi(\chi_1, \dots, \chi_n)$ be a
boolean formula. We construct a model, in which
\begin{itemize}
\item $x=\chi_1$ is the latent variable,
\item $y=\chi_2, \dots, \chi_n$ is the observable variable,
\item the prior distribution of $x$ is $p(x=T)=p(x=F)=0.5$,
\item the conditional distribution of $y$ given $x$ is $p(y|x)
      \propto 1$ if $\Phi(\cdot)$ is satisfied, $0$ otherwise.
\end{itemize}
To solve the SAT problem, we use three queries to the observation
reconstruction algorithm:
\begin{equation}
\begin{aligned}
p(x=T|y)& = 1 \\
p(x=F|y)& = 1 \\
p(x=T|y)& = 0.5
\end{aligned}
\end{equation}
In words, the first query checks whether there is a satisfying
assignment to $\chi_2, \dots, \chi_n$ with $\chi_1=T$, the second
query --- with $\chi_1=F$, the third query --- with both
$\chi_1=T$ and $\chi_1=F$.  If at least one of the queries
returns a solution, the formula is satisfiable.  Otherwise, it
is not. A satisfying assignment can be recovered from the
reconstructed observation.
\end{proof}

\subsection{Solution for Conjugate Models}
		
Having shown that the problem is hard in general, we now
identify a class of models for which the problem admits an
efficient, closed-form solution. Conjugate
models~\citep{GCS+13,K36} constitute such a class, as established
by Theorem~\ref{thm:conjugate}.

\begin{thm}
\label{thm:conjugate}
Let the joint density $p(x, y)$  be defined by a conjugate
exponential family model, as follows:
\begin{equation}
\begin{aligned}
p(x, y) & = p(x)p(y|x) \\
p(y|x) & \propto g(x)\exp(\phi(x)^\top t(y)) \\
p(x) & \propto g(x)^\eta\exp\left(\phi(x)^\top \nu\right) 
\end{aligned}
\end{equation}
Let, then, the parameter posterior $x|\pmb{y}^*$ resulting from conditioning
on a set $\pmb{y}^*$ of $N^*$ observations $\pmb{y}^*$ be represented by a set
of $S$ samples $\pmb{x}^*$.

Then, any set $\hat{\pmb{y}}$ of $\hat N$ observations satisfying 
\begin{equation}
\sum_{i=1}^{\hat N} t(\hat y_i)=\tilde \nu^* - \nu
\end{equation}
where $\tilde \nu^*$ is the maximum likelihood estimate of the
parameter $\nu^*$ of the posterior of the form 
\begin{equation}
p(x|\pmb{y*}) = \propto g(x)^{\eta + N}\exp(\left(\phi(x)^\top \nu^*\right)
\end{equation}
on the sample set $x^*$, will yield a posterior distribution that approaches
the posterior as $S \to \infty$.
\end{thm}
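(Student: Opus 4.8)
The plan is to exploit the sufficiency structure of the conjugate family twice: once to see that the true posterior depends on the original observations only through an aggregate natural parameter, and once to see that this same parameter is exactly what a set of virtual observations controls. First I would carry out the conjugate update explicitly. Multiplying the prior by the $N^*$ likelihood factors gives
\begin{equation}
p(x \mid \pmb{y}^*) \propto g(x)^{\eta + N^*}\exp\!\left(\phi(x)^\top\Big(\nu + \sum_{i=1}^{N^*} t(y_i^*)\Big)\right),
\end{equation}
so the posterior is again a member of the family, with count $\eta + N^*$ and natural parameter $\nu^* = \nu + \sum_i t(y_i^*)$. The point to stress is that the posterior distribution is a function of the pair $(\eta + N^*, \nu^*)$ alone, so reproducing the posterior amounts to reproducing this pair.

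Next I would perform the same computation for a candidate virtual set $\hat{\pmb{y}}$ of size $\hat N$. Conditioning the model on $\hat{\pmb{y}}$ yields a posterior of identical form, with count $\eta + \hat N$ and natural parameter $\nu + \sum_{i=1}^{\hat N} t(\hat y_i)$. Choosing $\hat N$ to match the count and imposing the moment equation $\sum_i t(\hat y_i) = \tilde\nu^* - \nu$ therefore forces the virtual posterior to be the family member with natural parameter $\tilde\nu^*$. This reduces the entire claim to a single question: how close is $\tilde\nu^*$ to the true $\nu^*$?

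The heart of the argument, and the step I expect to be the main obstacle, is the asymptotic analysis of the fitted parameter. The samples $\pmb{x}^*$ are i.i.d.\ draws from the true posterior, which is itself a member of the exponential family $\exp\big((\eta+N)\log g(x) + \phi(x)^\top \nu^*\big)$ in $x$, with sufficient statistics $(\log g(x), \phi(x))$ and natural parameters $(\eta+N, \nu^*)$. Fitting this same family to $\pmb{x}^*$ by maximum likelihood is thus a well-specified estimation problem, and under the standard regularity conditions for exponential families (identifiability, i.e.\ injectivity of the natural-parameter map, and a nonsingular Fisher information) the MLE is consistent. Hence $\tilde\nu^* \xrightarrow{p} \nu^*$ as $S \to \infty$ (and likewise for the count, if it too is estimated from the samples). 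The delicate points here are verifying identifiability so that the limit is the genuine parameter rather than a spurious one, and the integer-count subtlety: the estimated count need not be an integer, which is precisely what motivates allowing weights on the virtual observations later in the paper.

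Finally I would convert convergence of parameters into convergence of distributions. Because the density of the family is a continuous (indeed smooth) function of its natural parameters on the interior of the parameter space, $\tilde\nu^* \to \nu^*$ implies that the virtual posterior converges to the true posterior pointwise, hence in total variation by Scheff\'e's lemma, and, by continuity of the KL divergence in the natural parameter, in KL divergence as well. This yields the claimed statement that the posterior induced by the weighted virtual observations approaches the original posterior as $S \to \infty$.
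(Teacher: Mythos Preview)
Your proposal is correct and follows essentially the same route as the paper: write out the conjugate update to identify $\nu^* = \nu + \sum_i t(y_i^*)$, observe that any virtual set with matching total sufficient statistic reproduces the posterior, and then invoke consistency of the MLE to conclude $\tilde\nu^* \to \nu^*$. You are more careful than the paper in spelling out the regularity conditions for consistency and in explicitly passing from parameter convergence to distributional convergence (via Scheff\'e), steps the paper leaves implicit, but the underlying argument is the same.
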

\begin{proof}
Due to conjugacy, the posterior has the form~\citep{GCS+13}
\begin{equation}
p(x|\pmb{y}^*) \propto g(x)^{\eta + N}\exp\left(\phi(x)^\top (\nu + \sum_{i=1}^{N^*} t(y^*_i))\right).
\end{equation} 
that is, $\nu^* = \nu + \sum{i=1}^{N^*} t(y_i^*)$. Since the
posterior in conjugate models is determined by the number of
observations and their sufficient statistics, any set of $\hat N$
observations with the same total sufficient statistics 
\begin{equation}
\sum_{i=1}^{\hat N} t(\hat y_i) = \sum_{i=1}^{N^*} t(y_i^*) = \nu^* - \nu
\label{eqn:conjugate-suff-stat}
\end{equation}
yields the same posterior. Maximum likelihood estimator is
consistent~\citep[Chapter 7]{CB01}, hence $\tilde \nu^* \to \nu^*$ as $S \to
\infty$, which completes the proof.
\end{proof}

As an illustration, let us apply the computation implied by
Theorem~\ref{thm:conjugate} to the Beta-Bernoulli model:

\begin{equation}
\begin{aligned}
p(y|x) & = \mathrm{Bernoulli}(y; p=x) \\
p(x) & = \mathrm{Beta}(x; \alpha, \beta)
\end{aligned}
\end{equation}

Observations $y$ take values of either $0$ or $1$, the summary
statistic of the Bernoulli distribution is just the value of
$y$, $t(y) \coloneqq y$, and the canonical exponential family form of
the density of the Beta distribution is
\begin{equation}
\begin{aligned}
\mathrm{Beta}(x; \alpha, \beta) & = \frac 1 {\mathrm{B}(\alpha, \beta)} x^{\alpha}(1-x)^{\beta} \\
& = \frac 1 {\mathrm{B}(\alpha, \beta)} (1 - x)^{\alpha + \beta} \exp \left( \alpha
\log \frac x {1 - x}\right)
\end{aligned}
\end{equation}
with the correspondence $g(x) \coloneqq 1 - x$, $\eta \coloneqq \alpha +
\beta$, $\nu \coloneqq \alpha$. 

Consider the case that for $\alpha=1, \beta=1$ and $N^*=10$
observations we infer the posterior approximated by $S=5000$
samples,  such that the maximum likelihood estimate of
$\alpha^*$ is $\alpha^* = 7$\footnote {To simplify the example,
we assume that $\alpha$ is rounded to the nearest integer.}.
Solving~\eqref{eqn:conjugate-suff-stat}, we obtain that
conditioning on any set of $10$ observations with $6$ successes
($1$) and $4$ failures ($0$) yields the same posterior.

Theorem~\ref{thm:conjugate} does not bear practical
significance: indeed, for conjugate models we can directly
obtain the parameters of the posterior and plug them into the
model for incremental belief updating. However, the theorem
suggests that efficient approximate solutions for the problem of
reconstructing observations given the posterior can be found.

\subsection{Problem Admitting Efficient Approximate Solution}

Having shown that the general problem is hard, we seek to define
a modified problem that can be solved efficiently, exactly or
approximately. To this end, we want to address three aspects of
the problem definition.

\begin{itemize}
\item First, optimization is efficient in a differentiable
parameter space, however we do not want to restrict the problem
to a class of models where observations belong to
$\mathcal{R}^n$, and where the joint probability is
differentiable by observations. Instead of searching for
observations themselves, we can choose and fix a candidate set
of reconstructed observations, for example by sampling from the
predictive posterior, and search for observation
\textit{weights} such that the sum of the weights is equal to
the size $N^*$ of the original observation set.  Conditioning on a
weighted set of observations is related to distributional or
uncertain evidence~\citep{TZR+21,MMW23}, and
can be seen as searching for the most likely categorical
distribution on the reconstructed observation set.

\item Second, to make the problem easier to solve, we may reveal
more information about the inference; in particular, we may
reveal the original observation set $\pmb{y}^*$ along with 
the inferred posterior $\pmb{x}^*$, rather than just the number
of observations. This relaxation makes practical sense ---
indeed, the original observations are available at the inference
time anyway --- and, as we will see later, helps to devise an
efficient solution.

\item Third, our motivation is to find a set of observations that
results in the same or approximately the same posterior. Rather
than looking for the most likely set of observations given the
posterior, which implies inversion of the model's dependencies
($y$ would have to be conditioned on $\pmb{x}^*$) we should look
for a set of observations that minimizes the divergence from the
original posterior, e.g. in terms of KL divergence.
\end{itemize}

Following these three ideas, we formulate a modified problem,
which, as we will later see, admits an efficient solution.
\begin{pbm}
\label{pbm:single-level}
A statistical model is defined by its factored joint probability (mass or
density) function $p(x, y)=p(x)p(y|x)$, where $x$ is the latent variable,
and $y$ is the observable variable (both can be vectors). The
model is conditioned on a set $\pmb{y}^*$ of $N^*$
observations, and a set $\pmb{x}^*$ of $S$ i.i.d. samples from
the posterior is obtained.

Given $\pmb{x}^*$, $\pmb{y}^*$, and a set of virtual
observations $\hat{\pmb{y}}$ of size $\hat N$, what should be the 
weights $\pmb{w} \in \mathcal{R}^{\hat N}$ of the virtual
observations, so that KL divergence between the original
posterior $x|\pmb{y}^*$ and the reconstructed posterior
$x|\hat{\pmb{y}}, \pmb{w}$ is minimized?
\end{pbm}

Problem~\ref{pbm:single-level} considers a flat,
single-level model, in which virtual observations are sought
with respect to the originally observed random variables.
However, a real power of virtual observations is revealed in
multi-level models (see Section~\ref{sec:multi-level-models}):

\begin{pbm}
\label{pbm:multi-level}
A multi-level statistical model is defined by its factored joint
probability (mass or density) function $p(x, z_k, y_k) = (x)p(z_k|x)p(y_k|z_k),\,
k \in 1 \dots K^*$, where
$x$ and $z_k$ are the latent variables, and $y_k$ are the observable
variables; $x$ is customarily called the 
\textit{hyperparameter}, and $z_k$ are the \textit{group
parameters}. The model is conditioned on K sets $\pmb{y_k}^*$ of $N_k^*$
observations, and a set $\pmb{x}^*$ of $S$ i.i.d.
samples from the posterior is obtained.

Given $\pmb{x}^*$, $\pmb{y}^*_1, \dots, \pmb{y}^*_{K^*}$, and $\hat K$ sets of virtual
observations $\hat{\pmb{z}}_k$ of size $\hat M$,
what should be the 
weights $\pmb{v} \in \mathcal{R}^{\hat K}$ of observation groups and
the weights $\pmb{w}_k \in \mathcal{R}^{M}$ of observations
within each group so that the KL divergence between the original
posterior $x|\pmb{y}^*$ and the reconstructed posterior
$x|\hat{\pmb{z}}, \pmb{v}, \pmb{w}_1, \dots, \pmb{w}_{\hat K}$ is minimized?
\end{pbm}

\section{Solution}
\label{sec:solution}

In this section, we gradually develop a solution to the problem
of replacing original observations with weighted virtual
observations.  Our ultimate goal and final
result is a solution for Problem~\ref{pbm:multi-level}.
However, we start with the basic cases of a flat, single-level
model (Problem~\ref{pbm:single-level}) and a two-level
model with a single group. These two basic cases provide us with
building blocks for solving the problem for multi-level models.

\subsection{Single-Level Models}

The probability of $\pmb{y}^*$ given $x$ has the form
\begin{equation}
p(\pmb{y}^*|x) = \prod_{i=1}^{N^*} p(y^*_i|x) = \exp \left(\sum_{i=1}^{N^*} \log p(y^*_i|x) \right)
\label{eqn:p-y-star-given-x-single-level}
\end{equation}

\begin{dfn}
\label{dfn:p-w-given-x-single-level}
Following the form of \eqref{eqn:p-y-star-given-x-single-level}, we define the probability
of $\pmb{w}$ given $\pmb{x}^*$ for Problem~\ref{pbm:single-level} as
\begin{equation}
p(\pmb{w}|x) = h(\pmb{w}) \prod_{i=1}^{\hat N} p(\hat y_i|x)^{w_i} = \exp\left( \log h(\pmb{w}) + \sum_{i=1}^{\hat N} w_i \log p(\hat y_i | x) \right)
\label{eqn:p-w-given-x-single-level}
\end{equation}
where $h(\pmb{w})$ is a base measure over $\pmb{w}$.
\end{dfn}

Definition~\ref{dfn:p-w-given-x-single-level} can be interpreted as the
conditional probability of a multinomial distribution~\citep{TZR+21,MMW23}
with the number of trials $W=\sum_{i=1}^{\hat N} w_i$ and event
probabilities $\frac {w_i} W$ over events $\hat y_i$ for $i \in
1 \dots \hat N$.

A guide to algorithmic solution of Problem~\ref{pbm:single-level}
is given by Theorem~\ref{thm:single-level}.
\begin{thm}
\label{thm:single-level}
Under the settings of Problem~\ref{pbm:single-level}, $\pmb{w}$
is Monte Carlo approximated by a solution of
\eqref{eqn:single-level}:
\begin{equation}
\begin{aligned}
\pmb{w} \approx & \arg \max \frac 1 S \sum_{s=1}^S   \sum_{i=1}^{\hat N} w_i \log p(\hat y_i|x_s^*)  \\
       &- \log \sum_{s=1}^S \exp\left(\sum_{i=1}^{\hat N}  w_i \log p(\hat y_i|x_s^*) - \sum_{i=1}^{N^*} \log p(y_i^*|x_s^*)\right) \\
&\mbox{s.t. } \sum_{i=1}^{\hat N} w_i = N^*
\end{aligned}
\label{eqn:single-level}
\end{equation}
The approximation tends to the exact solution as $S \to \infty$.
\end{thm}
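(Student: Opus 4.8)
The plan is to derive~\eqref{eqn:single-level} from the defining objective of Problem~\ref{pbm:single-level} by a sequence of exact simplifications, followed by a Monte Carlo step that exploits the available posterior samples $\pmb{x}^*$. Reading the divergence as the forward KL $D_{\mathrm{KL}}(p(x|\pmb{y}^*)\,\|\,p(x|\hat{\pmb{y}},\pmb{w}))$ — the only direction for which we actually possess samples from the reference distribution — I would first note that the entropy term $\mathbb{E}_{p(x|\pmb{y}^*)}[\log p(x|\pmb{y}^*)]$ is independent of $\pmb{w}$, so minimizing the divergence is equivalent to maximizing $\mathbb{E}_{p(x|\pmb{y}^*)}[\log p(x|\hat{\pmb{y}},\pmb{w})]$. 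Applying Bayes' rule to the reconstructed posterior, $p(x|\hat{\pmb{y}},\pmb{w}) = p(x)\,p(\pmb{w}|x)/Z(\pmb{w})$ with $Z(\pmb{w}) = \int p(x)\,p(\pmb{w}|x)\,dx$, splits this into $\mathbb{E}_{p(x|\pmb{y}^*)}[\log p(x)] + \mathbb{E}_{p(x|\pmb{y}^*)}[\log p(\pmb{w}|x)] - \log Z(\pmb{w})$. Because the reconstructed posterior reuses the same prior $p(x)$, the term $\mathbb{E}[\log p(x)]$ carries no dependence on $\pmb{w}$ and is discarded.

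For the likelihood term I would substitute Definition~\ref{dfn:p-w-given-x-single-level}, obtaining $\mathbb{E}_{p(x|\pmb{y}^*)}[\log p(\pmb{w}|x)] = \log h(\pmb{w}) + \sum_i w_i\,\mathbb{E}_{p(x|\pmb{y}^*)}[\log p(\hat y_i|x)]$, and replace the expectation by the empirical average over $\pmb{x}^*$, which is exactly the first line of~\eqref{eqn:single-level}. The crux is the normalizer $\log Z(\pmb{w})$, since $Z(\pmb{w})$ is an integral against the prior whereas our samples are from the posterior. I would therefore reweight: because $p(x|\pmb{y}^*) \propto p(x)\,p(\pmb{y}^*|x)$, the ratio $p(x)\,p(\pmb{w}|x)/p(x|\pmb{y}^*)$ equals $p(\pmb{y}^*)\,p(\pmb{w}|x)/p(\pmb{y}^*|x)$, so $Z(\pmb{w}) = p(\pmb{y}^*)\,\mathbb{E}_{p(x|\pmb{y}^*)}[\,p(\pmb{w}|x)/p(\pmb{y}^*|x)\,]$. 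Approximating this expectation by its empirical average over $\pmb{x}^*$ and taking logs yields the log-sum-exp of the second line, whose exponent is $\log p(\pmb{w}|x_s^*) - \log p(\pmb{y}^*|x_s^*) = \log h(\pmb{w}) + \sum_i w_i \log p(\hat y_i|x_s^*) - \sum_{j=1}^{N^*} \log p(y_j^*|x_s^*)$. The factor $\log p(\pmb{y}^*)$ and the additive $\log S$ are constants in $\pmb{w}$ and drop from the $\arg\max$; crucially, the base measure $\log h(\pmb{w})$ enters the likelihood term with a plus sign and the subtracted normalizer with a minus sign, so it cancels exactly — explaining its absence from~\eqref{eqn:single-level}. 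Appending the constraint $\sum_i w_i = N^*$ fixed by the problem formulation gives the stated program.

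For the limiting statement, I would appeal to consistency of the two Monte Carlo estimators: since $\pmb{x}^*$ are i.i.d. from $p(x|\pmb{y}^*)$, both the empirical average in the first term and the importance-sampling average inside the logarithm converge almost surely to their expectations by the strong law of large numbers, so the approximate objective converges pointwise to the exact objective whose maximizer is the sought $\pmb{w}$. I expect the main obstacle to be promoting this pointwise convergence of objectives into convergence of their \emph{maximizers}: this needs a uniform-convergence or equicontinuity argument over the compact feasible simplex $\{\pmb{w}\geq 0,\ \sum_i w_i = N^*\}$, combined with continuity of the log-sum-exp in $\pmb{w}$, after which a standard $\arg\max$-continuity result closes the gap. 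A secondary point worth checking is that the importance weights $p(\pmb{w}|x)/p(\pmb{y}^*|x)$ have finite variance under $p(x|\pmb{y}^*)$ on the feasible set, so that the normalizer estimate is well behaved and the pointwise limit is the genuine $\log Z(\pmb{w})$.
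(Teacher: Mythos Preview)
Your proposal is correct and follows essentially the same route as the paper: drop the $\pmb{w}$-independent entropy and prior terms from the forward KL, expand $p(\pmb{w}|x)$ via Definition~\ref{dfn:p-w-given-x-single-level}, rewrite the evidence $p(\pmb{w})$ as a posterior expectation through the importance identity $p(x)/p(x|\pmb{y}^*)=p(\pmb{y}^*)/p(\pmb{y}^*|x)$ so that $h(\pmb{w})$ cancels, and replace both expectations by empirical averages over $\pmb{x}^*$. Your closing remarks on uniform convergence over the compact feasible set and integrability of the importance weights go beyond what the paper actually verifies---it simply invokes ``consistency of Monte Carlo approximation''---so if anything you are being more careful than the original.
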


This optimization objective has an intuitive interpretation:
\begin{itemize}
\item The first term maximizes the average log
	probability of the weighted set of observations $\hat{\pmb{y}},
	\pmb{w}$ given $\pmb{x}^*$.
\item The second term penalizes the probability of the weighted
	set of observations $\hat{\pmb{y}}, \pmb{w}$ for being greater
	than the conditional probability of the original set of
	observations $\pmb{y}^*$, for any $x^*$.
\end{itemize}
Maximizing the first term only would put all of the weight on a
single $\hat y_i$, which would yield a posterior with a lower
variance than the original one. Minimizing the second term only
would yield a posterior with a higher variance than the original
one. 

\begin{proof}
We write our optimization target in terms of KL divergence between
$\pmb{x}|\pmb{y}^*$ and $\pmb{x}|\hat{\pmb{y}}, \pmb{w}$ as
\begin{equation}
\begin{aligned}
\pmb{w} & = \arg \min \mathrm{KL}(p(x|\pmb{y}^*)||p(x|\pmb{w}))  \\
        & = \arg \max \mathbb{E}_{x|\pmb{y}^*}  \log \frac {p(x|\pmb{w})} {p(x|\pmb{y}^*)} \\
        & = \arg \max \mathbb{E}_{x|\pmb{y}^*}  \log \frac {p(\pmb{w}|x) p(x)} {p(\pmb{w}) p(x|\pmb{y}^*)} \\
&\mbox{s.t. } \sum_{i=1}^{\hat N} w_i = N^*.
\end{aligned}
\end{equation}

For optimization, we only need to keep the terms that depend on $\pmb{w}$:

\begin{equation}
\begin{aligned}
\pmb{w} & = \arg \max \mathbb{E}_{x|\pmb{y}^*} \log \frac {p(\pmb{w}|x)} {p(\pmb{w})} \\
        & =\arg \max \mathbb{E}_{x|\pmb{y}^*} \log {p(\pmb{w}|x)}  - \log p(\pmb{w}) \\
        & =\arg \max \mathbb{E}_{x|\pmb{y}^*} \sum_{i=1}^{\hat N} w_i \log p(\hat y_i|x) + \log h(\pmb{w}) - \log p(\pmb{w})
\end{aligned}
\end{equation}

We now need to rewrite $\log p(\pmb{w})- \log h(\pmb{w})$ in terms of
$p(\pmb{w}|x)$ and $p(\pmb{y}^*|x)$, which we can compute:
\begin{equation}
\begin{aligned}
\log p(w) & = \log \mathbb{E}_{x|\pmb{y}^*} p(\pmb{w}|x) \frac {p(x)} {p(x|\pmb{y}^*)} \\
\mbox{By }&\mbox{application of the Bayes' rule,} \\
& = \log \mathbb{E}_{x|\pmb{y}^*} p(\pmb{w}|x) \frac {p(\pmb{y}^*)} {p(\pmb{y}^*|x)} \\
& = \log \mathbb{E}_{x|\pmb{y}^*}  \frac {p(\pmb{w}|x)} {p(\pmb{y}^*|x)} + \log p(\pmb{y}^*) \\
& = \log \mathbb{E}_{x|\pmb{y}^*} \frac {\prod_{i=1}^{\hat N} p(\hat y_i|x)^{w_i}} {\prod_{i=1}^{N^*} p(y^*_i|x)} + \log h(\pmb{w}) + \log p(\pmb{y}^*) \\
& = \log \mathbb{E}_{x|\pmb{y}^*} \exp \left( \sum_{i=1}^{\hat N} w_i \log p(\hat y_i|x) - \sum_{i=1}^{N^*} \log p(y^*_i|x) \right)  + \log h(\pmb{w}) + \log p(\pmb{y}^*) 
\end{aligned}
\end{equation}

We do not care about $h(\pmb{w})$ because it cancels out in $\log p(\pmb{w}) - h(\pmb{w})$. For optimization, we can drop the constant term $p(\pmb{y}^*)$ and obtain:

\begin{equation}
\begin{aligned}
\pmb{w} =& \arg \max \mathbb{E}_{x|\pmb{y}^*} \sum_{i=1}^{\hat N} w_i \log p(\hat y_i|x) \\
&- \log \mathbb{E}_{x|\pmb{y}^*} \exp \left( \sum_{i=1}^{\hat N} w_i \log p(\hat y_i|x)  - \sum_{i=1}^{N^*} \log p(y_i^*|x)\right)
\end{aligned}
\end{equation}

We approximate the expectations by Monte Carlo integration over the posterior samples to finally obtain:

\begin{equation}
\begin{aligned}
\pmb{w} \approx &\arg \max \frac 1 S \sum_{s=1}^S   \sum_{i=1}^{\hat N} w_i \log p(\hat y_i|x_s^*)  \\
       &- \log \sum_{s=1}^S \exp\left(\sum_{i=1}^{\hat N}  w_i \log p(\hat y_i|x_s^*) - \sum_{i=1}^{N^*} \log p(y_i^*|x_s^*)\right) \\
\end{aligned}
\end{equation}
By the virtue of consistency of Monte Carlo approximation, the
approximation tends to the exact solution as $S$ tends to
infinity.
\end{proof}

Theorem~\ref{thm:single-level} hints at an algorithm
for finding $\pmb{w}$, as follows:
\begin{algorithm}
\caption{Single-level problem.}
\label{alg:single-level}
\begin{enumerate}
\item Perform inference on the model of
Problem~\ref{pbm:single-level} given $\pmb{y}^*$ to
obtain $\pmb{x}^*$.
\item Guess $\hat{\pmb{y}}$, e.g. through sampling from the
inferred posterior predictive distribution.
\item Find $\pmb{w}$ by
maximizing~\eqref{eqn:single-level}.
\end{enumerate}
\end{algorithm}

Let us reify Algorithm~\ref{alg:single-level} on
examples.

\paragraph{Example: Beta-Bernoulli Model}

Model~\ref{model:betabern} represent a Bernoulli distribution with
a uniform $[0, 1]$ prior on the success probability:
\begin{equation}
	\begin{aligned}
	\theta & \sim \mathrm{Beta}(1, 1) \\
	y & \sim \mathrm{Bernoulli}(\theta)
	\end{aligned}
	\label{model:betabern}
\end{equation}
The latent variable of the model is $\theta$. First, we infer
the posterior distribution of $\theta$ given 12 observations
$\pmb{y}^*=\{1, 1, 1, 0, 1, 0, 0, 1, 1, 1, 0, 1\}$.
The posterior is shown in blue in
Figure~\ref{fig:betabern-wvo-posterior}. 
\begin{figure}
	\centering
	\begin{subfigure}[c]{0.29\linewidth} 
		\centering
		\includegraphics[width=0.9 \linewidth]{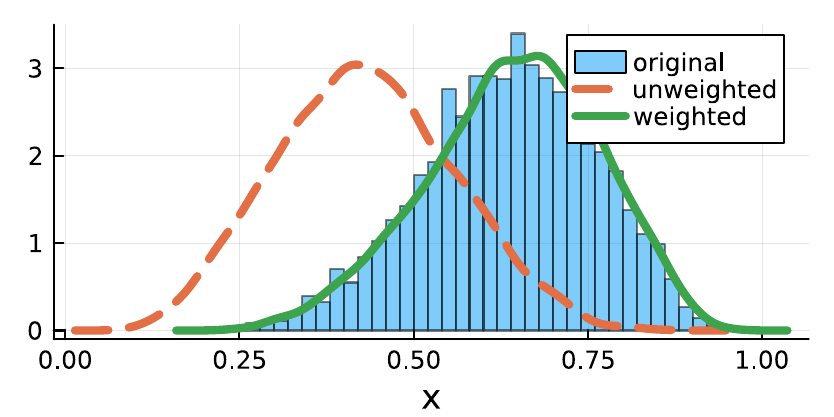}
		\caption{Posteriors} 
		\label{fig:betabern-wvo-posterior}
	\end{subfigure} 
	\begin{subfigure}[c]{0.29\linewidth} 
		\centering
		\includegraphics[width=0.9 \linewidth]{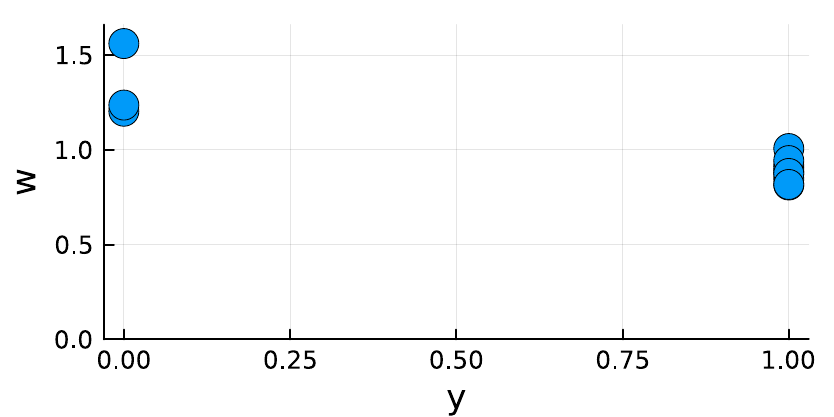}
		\caption{Inferred weights}
		\label{fig:betabern-weights} 
	\end{subfigure} 
	\caption{Beta-Bernoulli model: posteriors and weights.} 
\end{figure}
Second, we draw  12 samples $\hat{\pmb{y}}$ from the posterior
predictive distribution.  The parameter posterior conditioned on
$\hat{\pmb{y}}$ is likely to be quite different. For
$\hat{\pmb{y}} = \{0, 1, 1, 0, 0, 1, 0, 1, 0, 0, 0, 1\}$, the
posterior is shown in dashed red. Finally, we infer the weights
maximizing~\eqref{eqn:single-level}. The inferred weights are
shown in Figure~\ref{fig:betabern-weights}. For
Model~\ref{model:betabern}, it is easy to guess right weights,
which in our case are $\frac 4 7$ for each $\hat y_i = 0$ and
$\frac 8 5$ for each $\hat y_i = 1$ (or any other combination of
weights such that $\sum_{i=1}^{12} w_i\hat y_i =
8,\,\sum_{i=1}^{12} w_i(1-\hat y_i) = 4$). For empirical assessment,
we condition Model~\eqref{model:betabern} on the weighted
virtual observations $\hat{\pmb{y}}, \pmb{w}$. The inferred
posterior is shown in solid green and virtually coincides with
the original posterior.

\paragraph{Example: Normal Model with Non-Informative Prior}

Model~\ref{model:normal} represents a single-dimensional normal distribution
with unknown parameters:
\begin{equation}
	\begin{aligned}
		p(\mu, \log \sigma) & \propto 1 \\
		y|\mu, \sigma & \sim \mathrm{Normal}(\mu, \sigma)
	\end{aligned}
	\label{model:normal}
\end{equation}
The latent variables of the model are $\mu$ and $\sigma$.
First, we infer the posterior distributions of $\mu$ and
$\sigma$ given 10 observations $\pmb{y}^*=\{-0.56,$ $0.81,$
$-0.40,$ $1.10,$ $-0.8,$ $-0.35,$ $-1.65,$ $-0.37,$ $-0.20,$
$2.49\}$. The posterior is shown in blue in
Figure~\ref{fig:normal-wvo-posterior}. 
\begin{figure}
	\begin{subfigure}[c]{0.69\linewidth} 
		\centering
		\includegraphics[width=0.9 \linewidth]{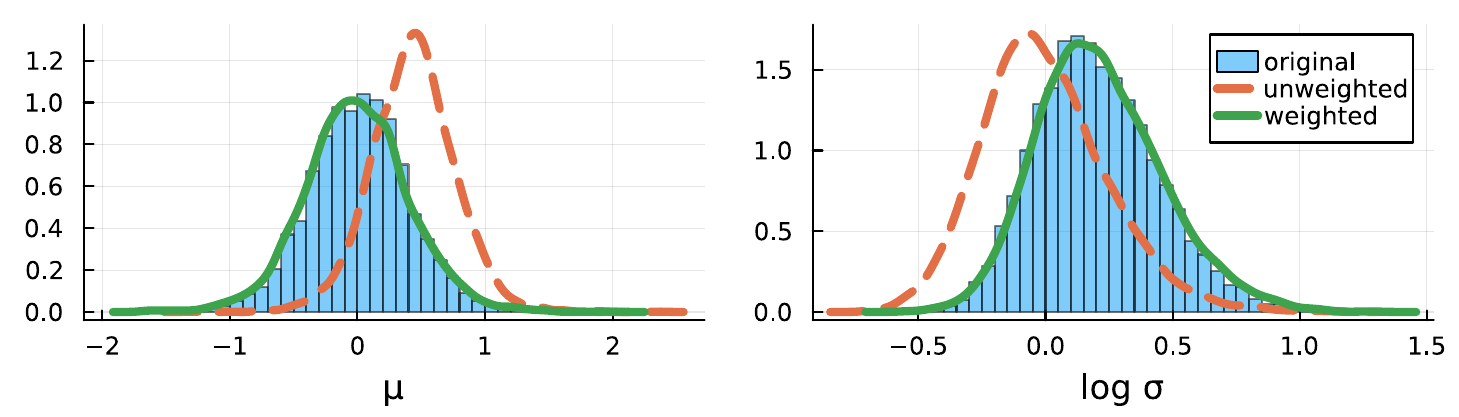}
		\caption{Posteriors} 
		\label{fig:normal-wvo-posterior}
	\end{subfigure} 
	\begin{subfigure}[c]{0.29\linewidth} 
		\centering
		\includegraphics[width=0.9 \linewidth]{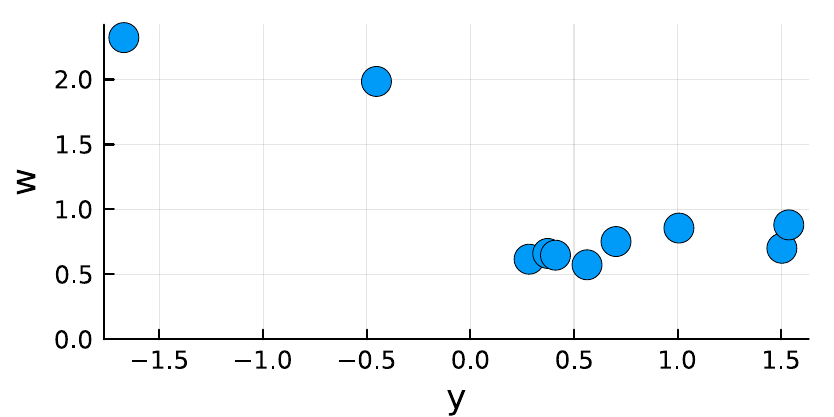}
		\caption{Inferred weights}
		\label{fig:normal-weights} 
	\end{subfigure} 
	\caption{Normal model: posteriors and weights.}
\end{figure}
Second, we draw  10 samples $\hat{\pmb{y}}$ from the posterior
predictive distribution.  The parameter posterior conditioned on
$\hat{\pmb{y}}$ is likely to be quite different. For
$\hat{\pmb{y}} = \{-1.67,$ $-0.45,$ $0.28,$ $0.37,$ $0.41,$
$0.56,$ $0.70,$ $1.01,$ $1.50,$ $1.54\}$, the posterior is shown
in dashed red. Finally, we infer the weights
maximizing~\eqref{eqn:single-level}. The inferred weights are
shown in Figure~\ref{fig:normal-weights}.  For assessment, we
condition Model~\eqref{model:normal} on the weighted virtual
observations $\hat{\pmb{y}}, \pmb{w}$. The inferred posterior is
shown in solid green and virtually coincides with the original
posterior.

\subsection{Multi-Level Models}

As a first step towards a solution of
Problem~\ref{pbm:multi-level}, let us consider a two-level
model with a single `intermediate' latent variable:
\begin{equation}
p(x, z, y) = p(x)p(z|x)p(y|z)
\end{equation}
As before, the model is conditioned on a set of $N^*$ observations
$\pmb{y}^*$, and the posterior is approximated by a set of $S$
samples $\pmb{x}^*$. Given a set of $\hat M$ virtual observations
$\hat{\pmb{z}}$ of $z$, we search for the weights $\pmb{w}$
that minimize the KL divergence between the original posterior
$x|\pmb{y}^*$ and the reconstructed posterior $x|\hat{\pmb{z}},
\pmb{w}$. 

The probability of $\pmb{y}^*$ given $x$ has the form
\begin{equation}
\begin{aligned}
p(\pmb{y}^*|x) &= \int_{\mathrm{dom}(z)} p(\pmb{y}^*|z)p(z|x) dz = \int_{\mathrm{dom}(z)} p(z|x) \prod_{i=1}^{N^*} p(y^*_i|z) dz \\
  &\approx \sum_{t=1}^T p(\tilde z_t|x) \frac {\prod_{i=1}^{N^*}p(y^*_i|\tilde z_t)} {p(\tilde z_t)}
\end{aligned}
\label{eqn:p-y-star-given-x-multi-level-K=1}
\end{equation}
for some set of i.i.d. samples $\tilde{\pmb{z}}$ from $\mathrm{dom}(z)$.
\begin{dfn}
\label{dfn:p-w-given-x-multi-level-K=1}
Following the form of \eqref{eqn:p-y-star-given-x-multi-level-K=1}, we define the probability
of $\pmb{w}$ given $\pmb{x}^*$ for Problem~\ref{pbm:multi-level} with $K=1$ as
\begin{equation}
p(\pmb{w}|x) = h(\pmb{w}) \sum_{i=1}^{\hat M} w_i p(\hat z_i|x)
\label{eqn:p-w-given-x-multi-level-K=1}
\end{equation}
where $h(\pmb{w})$ is a base measure over $\pmb{w}$.
\end{dfn}

Definition~\ref{dfn:p-w-given-x-multi-level-K=1} can be interpreted as the
conditional probability of an uncertainly observed
variable~\citep{J90,P88}  with observation probabilities $\frac
{w_i} W$ over observations $\hat z_i$ for $i \in 1 \dots \hat M$.

\begin{thm}
\label{thm:multi-level-K=1}
Under the settings of Problem~\ref{pbm:multi-level} with $K=1$,
$\pmb{w}$ is Monte Carlo approximated by a solution of
\eqref{eqn:multi-level-K=1}:
\begin{equation}
\begin{aligned}
\pmb{w} \approx & \arg \max \frac 1 S \sum_{s=1}^S   \log \sum_{i=1}^{\hat M} w_i p(\hat z_i|x_s^*)  \\
       &- \log \sum_{s=1}^S \exp\left(\log \sum_{i=1}^{\hat M}  w_i p(\hat z_i|x_s^*) - \sum_{i=1}^{N^*} \log p(y_i^*|x_s^*)\right) \\
&\mbox{s.t. } \sum_{i=1}^{\hat M} w_i = 1
\end{aligned}
\label{eqn:multi-level-K=1}
\end{equation}
The approximation tends to the exact solution as $S \to \infty$.
\end{thm}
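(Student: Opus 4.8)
The plan is to mirror the proof of Theorem~\ref{thm:single-level} almost verbatim, substituting the mixture form of Definition~\ref{dfn:p-w-given-x-multi-level-K=1} for the product-of-powers form used in the single-level case. The whole derivation is driven by the same KL-divergence manipulation; the only structural change is that $\log p(\pmb{w}|x)$ now contributes the non-linear term $\log \sum_{i=1}^{\hat M} w_i p(\hat z_i|x)$ rather than the linear term $\sum_{i=1}^{\hat N} w_i \log p(\hat y_i|x)$, and the normalization constraint becomes $\sum_{i=1}^{\hat M} w_i = 1$ since the $\hat z_i$ now play the role of alternative values of a single uncertainly-observed intermediate variable rather than a replacement for $N^*$ independent observations.

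First, I would write the objective as $\pmb{w} = \arg\min \mathrm{KL}(p(x|\pmb{y}^*)||p(x|\pmb{w}))$, expand the log-ratio using Bayes' rule, and discard every term that does not depend on $\pmb{w}$, arriving at $\pmb{w} = \arg\max \mathbb{E}_{x|\pmb{y}^*}\log p(\pmb{w}|x) - \log p(\pmb{w})$. Substituting Definition~\ref{dfn:p-w-given-x-multi-level-K=1} turns the first term into $\mathbb{E}_{x|\pmb{y}^*}\log\sum_{i=1}^{\hat M} w_i p(\hat z_i|x)$ together with a $\log h(\pmb{w})$ contribution.

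Next, I would re-express $\log p(\pmb{w})$ as $\log\mathbb{E}_{x|\pmb{y}^*}\,p(\pmb{w}|x)\,\frac{p(x)}{p(x|\pmb{y}^*)}$ and apply Bayes' rule to rewrite $\frac{p(x)}{p(x|\pmb{y}^*)}$ as $\frac{p(\pmb{y}^*)}{p(\pmb{y}^*|x)}$. Plugging in the mixture form, the ratio $\frac{p(\pmb{w}|x)}{p(\pmb{y}^*|x)}$ becomes $\exp\bigl(\log\sum_{i=1}^{\hat M} w_i p(\hat z_i|x) - \sum_{i=1}^{N^*}\log p(y_i^*|x)\bigr)$ up to the factor $h(\pmb{w})$, producing the log-sum-exp penalty term. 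The $\log h(\pmb{w})$ contributions cancel against each other in $\log p(\pmb{w}) - \log h(\pmb{w})$ and the constant $\log p(\pmb{y}^*)$ is dropped, leaving exactly the two-term objective. Finally, I would replace both expectations over $x|\pmb{y}^*$ by Monte Carlo averages over the samples $x_s^*$ to obtain \eqref{eqn:multi-level-K=1}, and invoke consistency of Monte Carlo integration for the $S\to\infty$ claim.

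The step requiring the most care --- and the only place where the multi-level case genuinely differs from the single-level one --- is the treatment of $\log p(\pmb{w}|x)$: because the mixture $\sum_i w_i p(\hat z_i|x)$ does not factor, I cannot invoke the identity $\log\prod_i p(\hat y_i|x)^{w_i} = \sum_i w_i\log p(\hat y_i|x)$ that linearized the single-level objective, and must instead carry $\log\sum_i w_i p(\hat z_i|x)$ as an atomic quantity both in the first term and inside the exponential of the second. Once this substitution is made, the cancellation of $h(\pmb{w})$, the removal of the constant $\log p(\pmb{y}^*)$, and the Monte Carlo limit all go through exactly as in the proof of Theorem~\ref{thm:single-level}, so I expect no additional obstacles.
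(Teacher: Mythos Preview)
Your proposal is correct and follows exactly the approach the paper takes: the paper's own proof is a single sentence stating that the argument is identical to that of Theorem~\ref{thm:single-level} with the mixture form of $p(\pmb{w}|x)$ from Definition~\ref{dfn:p-w-given-x-multi-level-K=1} substituted in place of the product-of-powers form. You have simply written out that substitution in full, including the observation that $\log\sum_i w_i p(\hat z_i|x)$ must be carried as an atomic term; this is more detail than the paper provides but the underlying route is the same.
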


\begin{proof} The proof is identical to the proof of
Theorem~\ref{thm:single-level}
with \eqref{eqn:p-w-given-x-multi-level} substituted instead of
\eqref{eqn:p-w-given-x-single-level}.
\end{proof}

Optimization target~\eqref{eqn:multi-level-K=1} still
depends on $p(\pmb{y}^*|x) = \int_{\mathrm{dom}(z)}
p(\pmb{y}^*|z)p(z|x) dz$.  Although a closed-form solution of the
integral is in general not available, for practical purposes
$p(\pmb{y}^*|x)$ can be estimated by Monte Carlo integration
over $z$, through forward sampling of $z|x$ for each $x \in
\pmb{x}^*$.  This computation does not depend on $\pmb{w}$ and
hence only has to be performed once, before the optimization of
$\pmb{w}$. With this addition, an algorithm for finding $\pmb{w}$ 
for Problem~\ref{pbm:single-level} also applies 
\textit{mutatis mutandis} to
Problem~\ref{pbm:multi-level} with $K=1$.

\paragraph{Example: Normal Model With Hyperprior on The Mean} 
Model~\eqref{eqn:hypernormal} is an extension of
Model~\eqref{model:normal} with a hyperprior imposed on the
mean:
\begin{equation}
\begin{aligned}
\nu, \log \tau & \sim \mathrm{Normal}(0, 1) \times \mathrm{Normal}(0, 1) \\
\mu, \log \sigma & \sim \mathrm{Normal}(\nu, \tau) \times \mathrm{Uniform}(-\infty, \infty) \\
y & \sim \mathrm{Normal}(\mu, \sigma)
\end{aligned}
\label{eqn:hypernormal}
\end{equation}
The hyperparameters of the model are $\nu$ and $\tau$, the
latent parameters are $\nu$ and $\sigma$, and the observed
variable is $y$. Unlike in Model~\eqref{model:normal}, the prior
on $\nu$ and $\log \tau$ has to be proper in order for the
posterior to be proper. The model is conditioned on the same set of
observations as Model~\eqref{model:normal} above. The posterior
is shown in blue in Figure~\ref{fig:hypernormal-wvo-posterior}.
\begin{figure}
	\centering
	\includegraphics[width=0.63\linewidth]{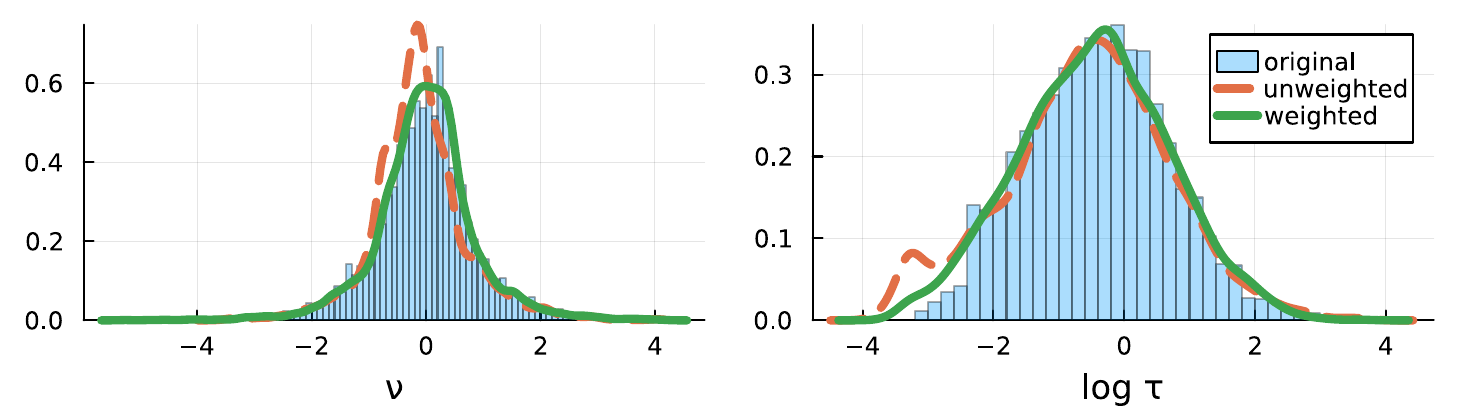}
	\caption{Normal model with hyperprior: posteriors.}
	\label{fig:hypernormal-wvo-posterior}
\end{figure}
Virtual observations are drawn from posterior predictive
distribution, but conditioning on them yields a different
posterior (dashed red).  Conditioning on a set of
weighted virtual observations with the weights found through maximization
of~\eqref{eqn:multi-level-K=1} (solid green) yields the same
posterior as the original one.
\vspace{\baselineskip}

Theorems~\ref{thm:single-level}
and~\ref{thm:multi-level-K=1} serve as building blocks of a
solution to Problem~\ref{pbm:multi-level} for $K > 1$. The probability of
$\pmb{y}^*_1, \dots, \pmb{y}^*_{K^*}$ given $x$
has the form 
\begin{equation}
\begin{aligned}
p(\pmb{y}^*_1,, \dots,, \pmb{y}^*_{K^*}|x) &=  \prod_{k=1}^{K^*} p(\pmb{y}^*_k|x) = \prod_{k=1}^{K*} \int_{\mathrm{dom}(z_k)} p(\pmb{y}^*_k|z_k)p(z_k|x) dz_k
\end{aligned}
\label{eqn:p-y-star-given-x-multi-level}
\end{equation}
\begin{dfn}
Following the form of \eqref{eqn:p-y-star-given-x-multi-level}, we define the probability
of $\pmb{v}, \pmb{w}_1, \dots, \pmb{w}_{\hat K}$ given $\pmb{x}^*$ for Problem~\ref{pbm:multi-level} as
\begin{equation}
p(\pmb{v}, \pmb{w}_1, \dots, \pmb{w}_{\hat K}) = h(\pmb{v}, \pmb{w}_1, \dots, \pmb{w}_{\hat K}) \prod_{k=1}^{\hat K} v_k \sum_{i=1}^{\hat M} w_{ki} p(\hat z_{ki}|x).
\label{eqn:p-w-given-x-multi-level}
\end{equation}
where $h(\pmb{v}, \pmb{w}_1, \dots, \pmb{w}_{\hat K})$ is a base measure over $\pmb{v}, \pmb{w}_1, \dots, \pmb{w}_{\hat K}$.
\label{dfn:p-w-given-x-multi-level}
\end{dfn}
One can view~\eqref{eqn:p-w-given-x-multi-level} as a composition of~\eqref{eqn:p-w-given-x-single-level} and~\eqref{eqn:p-w-given-x-multi-level-K=1}. Corollary~\ref{crl:multi-level} gives a solution for Problem~\ref{pbm:multi-level}:
\begin{crl}
\label{crl:multi-level}
Under the settings of Problem~\ref{pbm:multi-level},
$\pmb{v}, \pmb{w}_1, \dots, \pmb{w}_{\hat K}$ are Monte Carlo approximated by a solution of
\eqref{eqn:multi-level}:
\begin{equation}
\begin{aligned}
\pmb{v}, \pmb{w}_1, \dots, \pmb{w}_{\hat K} \approx & \arg \max \frac 1 S \sum_{s=1}^S   \sum_{k=1}^{\hat K} v_k \log \sum_{i=1}^{\hat M} w_{ki} p(\hat z_{ki}|x_s^*)  \\
       &- \log \sum_{s=1}^S \exp\left(\sum_{k=1}^{\hat K} v_k \log \sum_{i=1}^{\hat M}  w_i p(\hat z_i|x_s^*) - \sum_{k=1}^{N_k} \sum_{i=1}^{N_k^*} \log p(y_{ki}^*|x_s^*)\right) \\
&\mbox{s.t. } \sum_{k=1}^{\hat K} v_k = K^*,\,\sum_{i=1}^{\hat M} w_{ki} = 1
\end{aligned}
\label{eqn:multi-level}
\end{equation}
The approximation tends to the exact solution as $S \to \infty$.
\end{crl}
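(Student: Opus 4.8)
The plan is to reprise the variational derivation that established Theorems~\ref{thm:single-level} and~\ref{thm:multi-level-K=1}, now driven by the composite likelihood of Definition~\ref{dfn:p-w-given-x-multi-level}. First I would write the target as $\arg\min \mathrm{KL}(p(x|\pmb{y}^*)\,||\,p(x|\pmb{v}, \pmb{w}_1, \dots, \pmb{w}_{\hat K}))$, expand the reconstructed posterior by Bayes' rule, and drop every term that does not depend on the decision variables. As in the two earlier proofs this collapses to $\arg\max \mathbb{E}_{x|\pmb{y}^*}[\log p(\pmb{v}, \pmb{w}_1, \dots|x)] - \log p(\pmb{v}, \pmb{w}_1, \dots)$, so the skeleton of the argument is inherited wholesale and only the likelihood needs re-substituting.

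Substituting Definition~\ref{dfn:p-w-given-x-multi-level} and taking logarithms turns the outer product-of-powers over groups into $\sum_{k=1}^{\hat K} v_k \log \sum_{i=1}^{\hat M} w_{ki}\, p(\hat z_{ki}|x)$ --- the $v_k$ playing the role of the group-level exponents of the single-level form~\eqref{eqn:p-w-given-x-single-level}, the inner sum being the intra-group mixture of the $K{=}1$ form~\eqref{eqn:p-w-given-x-multi-level-K=1} --- plus the base measure $\log h$; this is the first term of~\eqref{eqn:multi-level}. For the marginal I would repeat verbatim the importance-sampling step over the posterior, rewriting $\log p(\pmb{v}, \pmb{w}_1, \dots)$ as $\log \mathbb{E}_{x|\pmb{y}^*}[\,p(\cdots|x)\,p(x)/p(x|\pmb{y}^*)\,]$ and using Bayes' rule to replace $p(x)/p(x|\pmb{y}^*)$ by $p(\pmb{y}^*)/p(\pmb{y}^*|x)$. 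Here $p(\pmb{y}^*|x)=\prod_{k=1}^{K^*} p(\pmb{y}_k^*|x)$ factorizes over groups, each factor being the marginal $\int p(\pmb{y}_k^*|z_k)p(z_k|x)\,dz_k$ estimated by forward sampling of $z_k|x$ exactly as discussed after Theorem~\ref{thm:multi-level-K=1}. The base measure $h$ then cancels against its twin in the first term, the constant $\log p(\pmb{y}^*)$ is discarded, and Monte Carlo approximation of both expectations over $\pmb{x}^*$ produces~\eqref{eqn:multi-level}, with consistency of the Monte Carlo estimator delivering convergence as $S\to\infty$.

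I expect the only real obstacle to be bookkeeping, since no new analytic idea is required beyond composing the two building blocks. Concretely, I would verify that the two constraints emerge from the two nested structures: $\sum_{k} v_k = K^*$ from the outer group-count / product-of-powers factor, mirroring $\sum_i w_i = N^*$ in Theorem~\ref{thm:single-level}, and each $\sum_i w_{ki} = 1$ from the inner mixture factor, mirroring $\sum_i w_i = 1$ in Theorem~\ref{thm:multi-level-K=1}. I would also check that the subtracted log-likelihood in the penalty of~\eqref{eqn:multi-level} is in fact $\log\prod_{k} p(\pmb{y}_k^*|x_s^*)$ rather than a naive per-observation sum, so that the log-sum-exp term compares the composite \emph{virtual} likelihood against the genuine per-group \emph{marginal} likelihoods. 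Reconciling these two factorizations --- product-over-groups on the outside, integration-over-$z_k$ on the inside --- is the single place where the single-level and $K{=}1$ derivations must be glued together consistently, and it is the step I would write out most carefully.
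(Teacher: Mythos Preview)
Your proposal is correct and mirrors the paper's own proof, which simply states that one follows the derivation of Theorem~\ref{thm:single-level} with the composite likelihood~\eqref{eqn:p-w-given-x-multi-level} substituted for~\eqref{eqn:p-w-given-x-single-level}. Your plan spells out in full the steps the paper leaves implicit, including the cancellation of the base measure, the importance-sampling rewrite of the marginal, and the provenance of the two constraint families; your final remark about the penalty term involving the per-group marginal likelihoods $\prod_k p(\pmb{y}_k^*|x)$ is exactly the point at which the composition of the single-level and $K{=}1$ arguments must be checked, and it goes through as you describe.
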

\begin{proof}The proof follows the proof of Theorem~\ref{thm:single-level} with~\eqref{eqn:p-w-given-x-multi-level} substituted instead of~\eqref{eqn:p-w-given-x-single-level}.
\end{proof}

As with~\eqref{eqn:multi-level-K=1}, $p(\pmb{y}_k^*|x)$
can be estimated by Monte Carlo integration over $z_k$, and
Algorithm~\ref{alg:multi-level} can be used to find
$\pmb{v}, \pmb{w}_1, \dots, \pmb{w}_{\hat K}$:

\begin{algorithm}
\caption{Multi-level problem.}
\label{alg:multi-level}
\begin{enumerate}
\item Perform inference on the model of
Problem~\ref{pbm:multi-level} given $\pmb{y}^*_1, \dots, \pmb{y}^*_{K^*}$ to
obtain $\pmb{x}^*$.
\item Estimate $p(\pmb{y}^*_k|x^*_s)$  for each $s \in 1 \dots
S$ and $k \in 1 \dots K$.
\item Guess $\hat{\pmb{z}}_1, \dots, \hat{\pmb{z}}_{\hat K}$, e.g.
through sampling from the inferred posterior predictive
distribution.
\item Find $\pmb{v}, \pmb{w}_1, \dots, \pmb{w}_{\hat K}$ by
maximizing~\eqref{eqn:multi-level}.
\end{enumerate}
\end{algorithm}

In the next session (Section~\ref{sec:studies}), we apply
Algorithm~\ref{alg:multi-level} to several multi-level models borrowed from
statistical literature and explore properties of the reconstructed posteriors,
in comparison to both the original posteriors and to approximations by means of
empirical Bayes.

\section{Case Studies}
\label{sec:studies}

It may be compelling to evaluate inference algorithms on large
highly-dimensional datasets. However, large datasets often lead
to convergence at the mode, which hinders such advantages of
Bayesian inference as uncertainty quantification and
incorporation of prior knowledge.  Small to medium-sized
datasets challenge algorithms to capture diverse patterns,
fostering a nuanced understanding of model robustness,
computational efficiency, and the ability to quantify
uncertainty.  For the case studies, we selected three inference
problems of gradually increasing complexity borrowed from
statistical literature. Each problem helps at highlighting a
different aspect of our approach to incremental belief updating
in probabilistic programming. We encourage the reader to consult
the supplied notebooks for details.

\subsection{Parallel Experiments in Eight Schools}
\label{sec:8schools}

This study~\citep{R81} is a popular example of application of
Bayesian multi-level modelling. Effects of coaching programs on
SAT-V scores in eight high schools are analyzed, given the
estimated treatment effect $y_i$ and the standard error of the
effect estimate $\sigma_i$ in each school.  Following
\citet[Section 5.5]{GCS+13}, who offer a detailed Bayesian
treatment of the problem, we use
Model~\eqref{eqn:8schools-model} for analysis:
\begin{equation}
\begin{aligned}
\mu & \sim \mathrm{Normal}(0, 5) \\
\tau & \sim \mathrm{HalfCauchy}(0, 5) \\
\nu_i & \sim \mathrm{Normal}(\mu, \tau) \\
y_i & \sim \mathrm{Normal}(\nu_i, \sigma_i)
\end{aligned}
\label{eqn:8schools-model}
\end{equation}
Here, $\mu$ and $\tau$ are hyperparameters, $\nu_i$ are group
parameters, and $y_i$ are observations; $\sigma_i$ is treated
as known observation noise.

First, we apply Algorithm~\ref{alg:multi-level} to reconstruct
the posterior using weighted virtual observations with $\hat M=10$.
The original and the reconstructed posteriors are shown in
Figure~\ref{fig:8schools-posteriors-and-weights}.
\begin{figure}
	\centering
	\begin{subfigure}[c]{0.69\linewidth} 
		\centering
		\includegraphics[width=0.9 \linewidth]{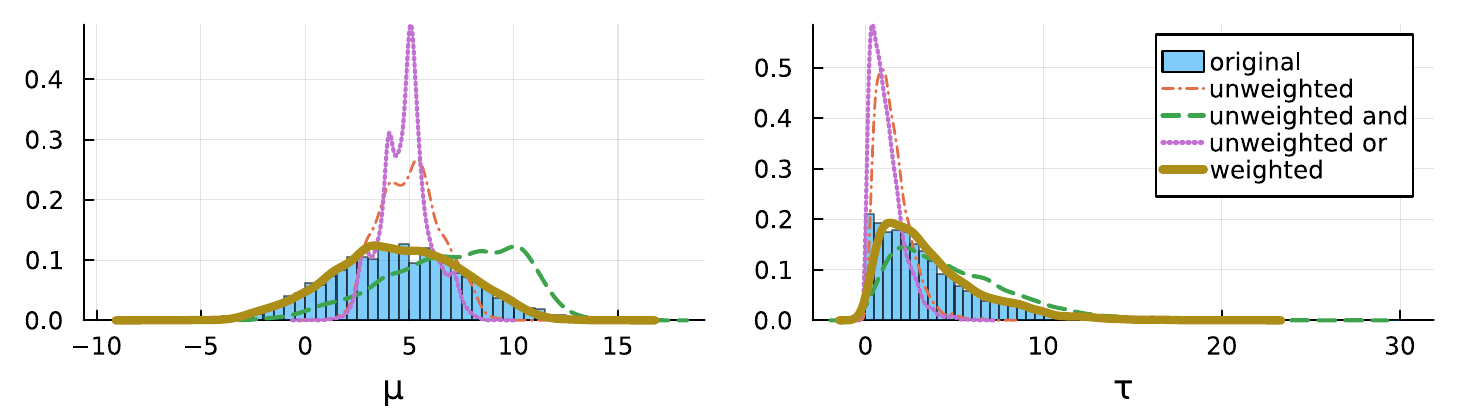}
		\caption{Hyperparameter posteriors} 
		\label{fig:8schools-wvo-posterior}
	\end{subfigure} 
	\begin{subfigure}[c]{0.69\linewidth} 
		\centering
		\includegraphics[width=0.9 \linewidth]{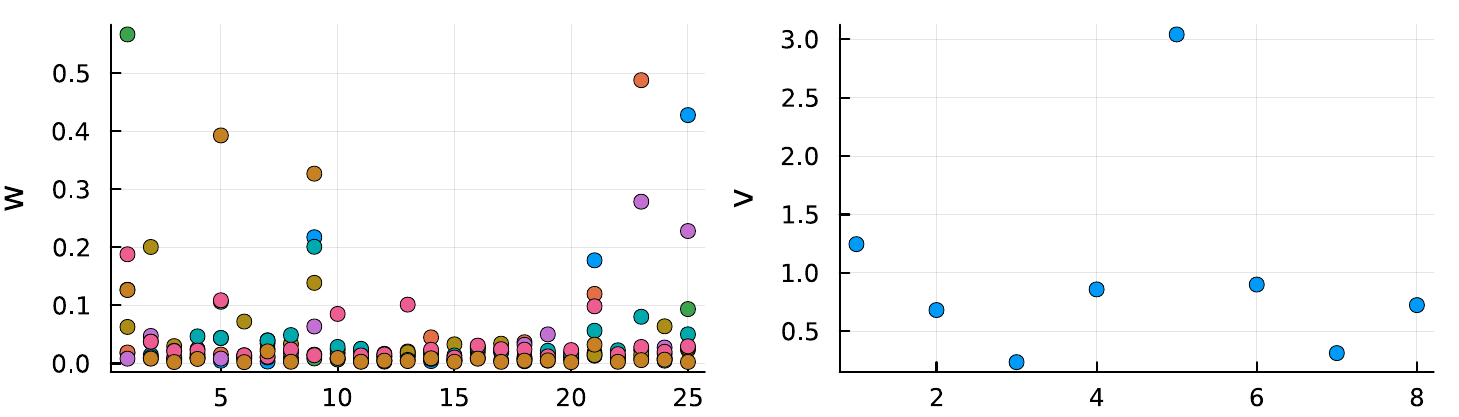}
		\caption{Inferred weights}
		\label{fig:8schools-weights} 
	\end{subfigure} 
	\caption{Eight schools: posteriors and weights.}
	\label{fig:8schools-posteriors-and-weights}
\end{figure}
As before, conditioning on weighted virtual observations results
in the same posterior as conditioning on the original dataset.
Conditioning on unweighted observations sampled from the
original posterior results in a different posterior.

Then, to simulate incremental inference, we perform
a form of leave-one-out cross-validation, as follows.
\begin{enumerate}
\item For each school, we infer the hyperparameter posterior
conditioned on the remaining seven schools.
\item Then, we construct weighted virtual observations corresponding
to the inferred posterior.
\item Finally, we infer the hyperparameter posterior conditioned
on weighted virtual observations and on the remaining school.
This posterior should match the posterior inferred on all eight
schools.
\end{enumerate}
In Figure~\ref{fig:8schools-incremental-posteriors}, each line
corresponds to leaving out a different school. While posteriors
of $\nu_i$ are of course different, the posteriors of
hyperparameters $\mu$ and $\tau$ coincide for all folds, as
expected.
\begin{figure}
\end{figure}

Finally, we compare the original posteriors of $\nu_i$ and the
incrementally inferred posteriors through weighted virtual
observations (WVO) and through marginal empirical Bayes (MEB),
with the normal and Gamma distributions fit to the
sample-approximated
posteriors of $\mu$ and $\tau$, correspondingly. Out of three
boxes of each color in
Figure~\ref{fig:8schools-original-vs-incremental-vs-meb}, the
leftmost box corresponds to the original posterior of
$\nu_i$, the middle box --- to the incrementally inferred
posterior using WVO, and the rightmost
box --- using MEB. One can observe that the
middle boxes (WVO) are better
approximations of the leftmost boxes (original) than
the rightmost boxes (MEB). Numerically, the hyperparameter posteriors
should be as close to each other as possible; the higher the
variation between folds, the worse. The standard
deviation of the mean of $\tau$ between folds is $0.20$ for
WVO, $0.24$ for MEB. For $\tau$, the standard deviation is
$0.25$ for WVO, $0.31$ for MEB.
\begin{figure}
	\begin{subfigure}[c]{0.49\linewidth} 
		\centering
		\includegraphics[width=0.9\linewidth]{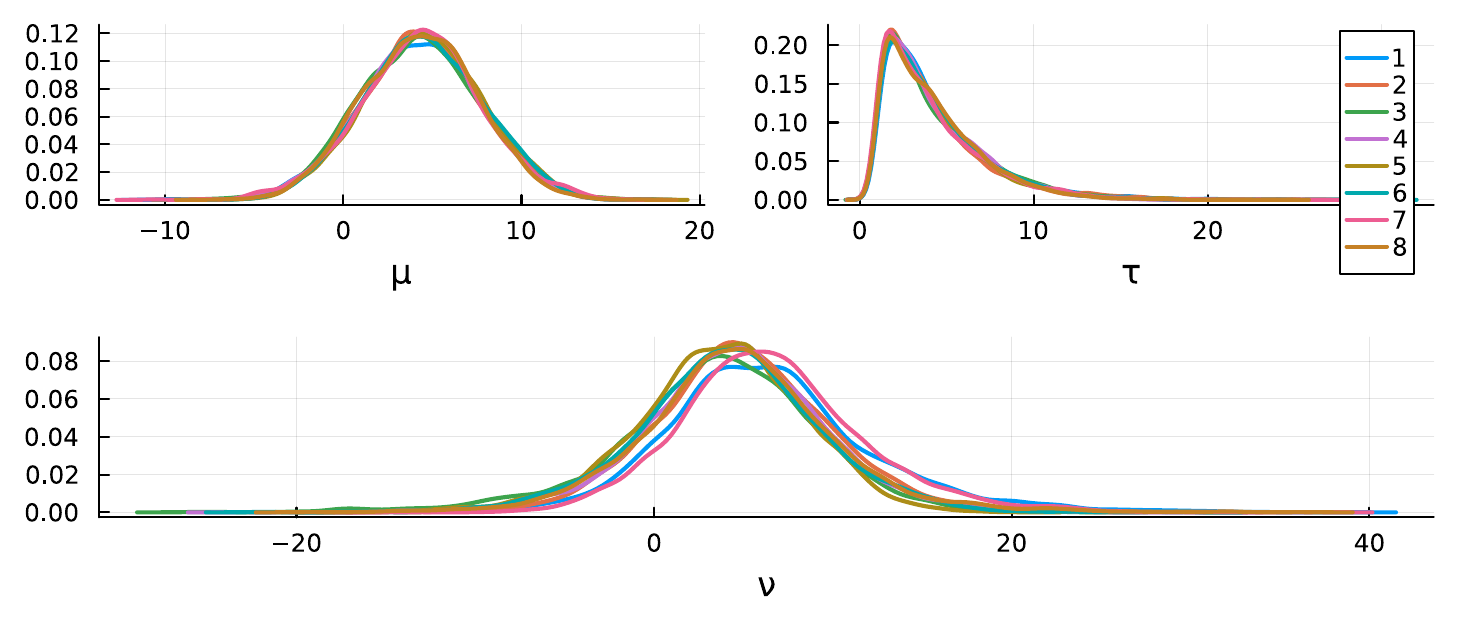}
		\caption{Incremental posteriors}
		\label{fig:8schools-incremental-posteriors}
	\end{subfigure} 
	\begin{subfigure}[c]{0.49\linewidth} 
		\centering
		\includegraphics[width=0.9\linewidth]{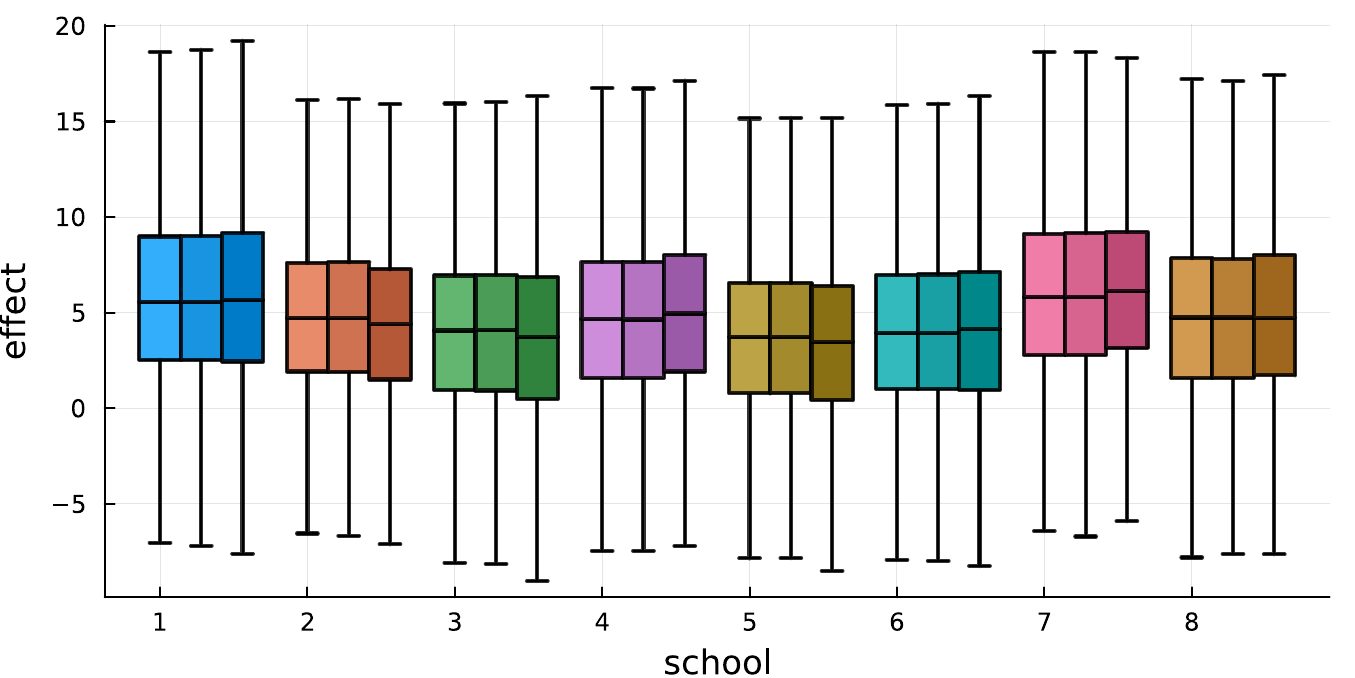}
		\caption{Original vs. WVO vs. MEB}
		\label{fig:8schools-original-vs-incremental-vs-meb}
	\end{subfigure} 
	\caption{Eight schools: cross-validation.}
	\label{fig:8schools-cross-validation}
\end{figure}

\subsection{Tumor Incidence in Rats}
\label{sec:rats}

The dataset on tumor incidence rats in 71 labs was introduced
and initially analyzed by \citet{T82}. For each lab, the total
number of rats in the experiment $n_i$, and the number of rats which
developed cancer $y_i$ are given. Again, following~\cite{GCS+13}, we
use Model~\eqref{eqn:rats} in our analysis:
\begin{equation}
\begin{aligned}
	p(\alpha, \beta) & \propto (\alpha + \beta)^{-5/2} \\
	\eta_i & \sim \mathrm{Beta}(\alpha, \beta) \\
	y_i & \sim \mathrm{Binomial}(n_i, \eta_i)
\end{aligned}
\label{eqn:rats}
\end{equation}
Here, $\alpha$ and $\beta$ are hyperparameters, $\eta_i$ are
group parameters, and $y_i$ are observations. 

Just as before, we apply Algorithm~\ref{alg:multi-level} with
$\hat M=10$ to reconstruct the posterior. The original and the
reconstructed hyperparameter posteriors are shown in
Figure~\ref{fig:rats-wvo-posterior}. Then, we evaluate
incremental inference by first inferring and reconstructing the
posterior on each set of 70 labs, leaving one lab out, and then
inferring the remaining lab's posterior by conditioning on the
weighted virtual observations and that lab's data.
\begin{figure}
	\centering
	\includegraphics[width=0.62\linewidth]{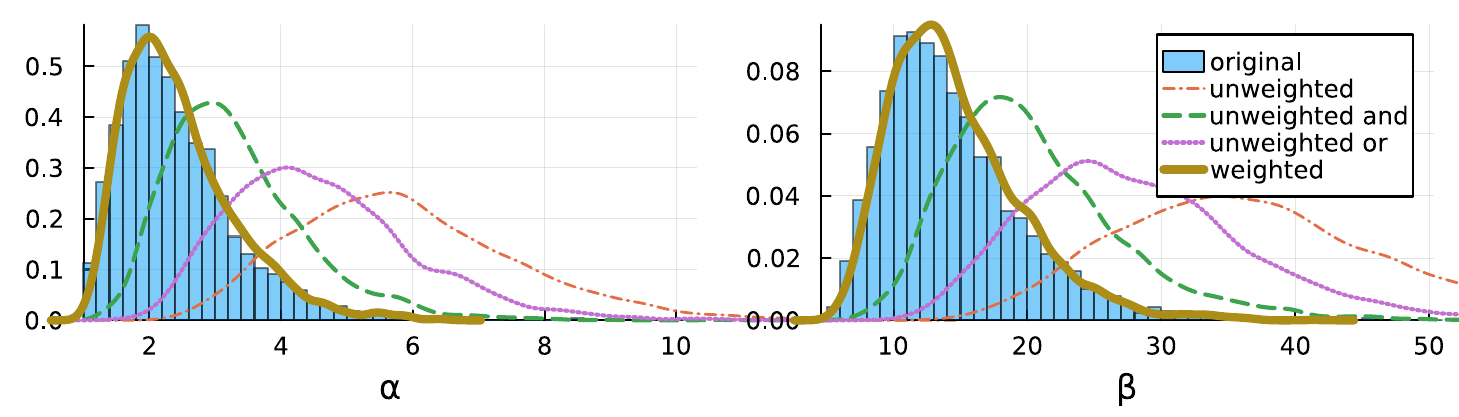}
	\caption{Rats: hyperparameter posteriors.}
	\label{fig:rats-wvo-posterior}
\end{figure} 
In Figure~\ref{fig:rats-original-vs-incremental}, we graphically
compare the original and the incrementally inferred parameter
posteriors by plotting ellipses with the center in the mean of
the original (horizontal axis) and the incremental (vertical
axis) posterior with the standard deviations of the original and
the incremental posterior as the width and the height. This
visualization suits better than a bar plot for a large number of
groups.  Ideally, all labs should be represented by perfect
circles with their centers lying on the bisector of the
coordinate axes; the results are quite close to that.

Additionally, we use this case study to explore the influence of
subsampling of groups ($\hat K < K^*$) on the reconstruction
quality. Figure~\ref{fig:rats-approximation-error} shows the
standard deviations of the means of $\alpha$ and $\beta$ among the
cross-validation folds as functions of $\hat K$.
According to the plots, for $\hat K \le 20$ the subsampling leads
to an increased reconstruction error. However, for $\hat K > 20$
the error is the virtually the same as for $\hat K = K^*$, an
evidence that for multi-level data sets with a large number of groups,
virtual observation sets with $\hat K \ll K^*$ are likely to be
sufficient.

\begin{figure}
\centering
\begin{subfigure}[c]{0.44\linewidth}
	\centering
	\includegraphics[width=0.85\linewidth]{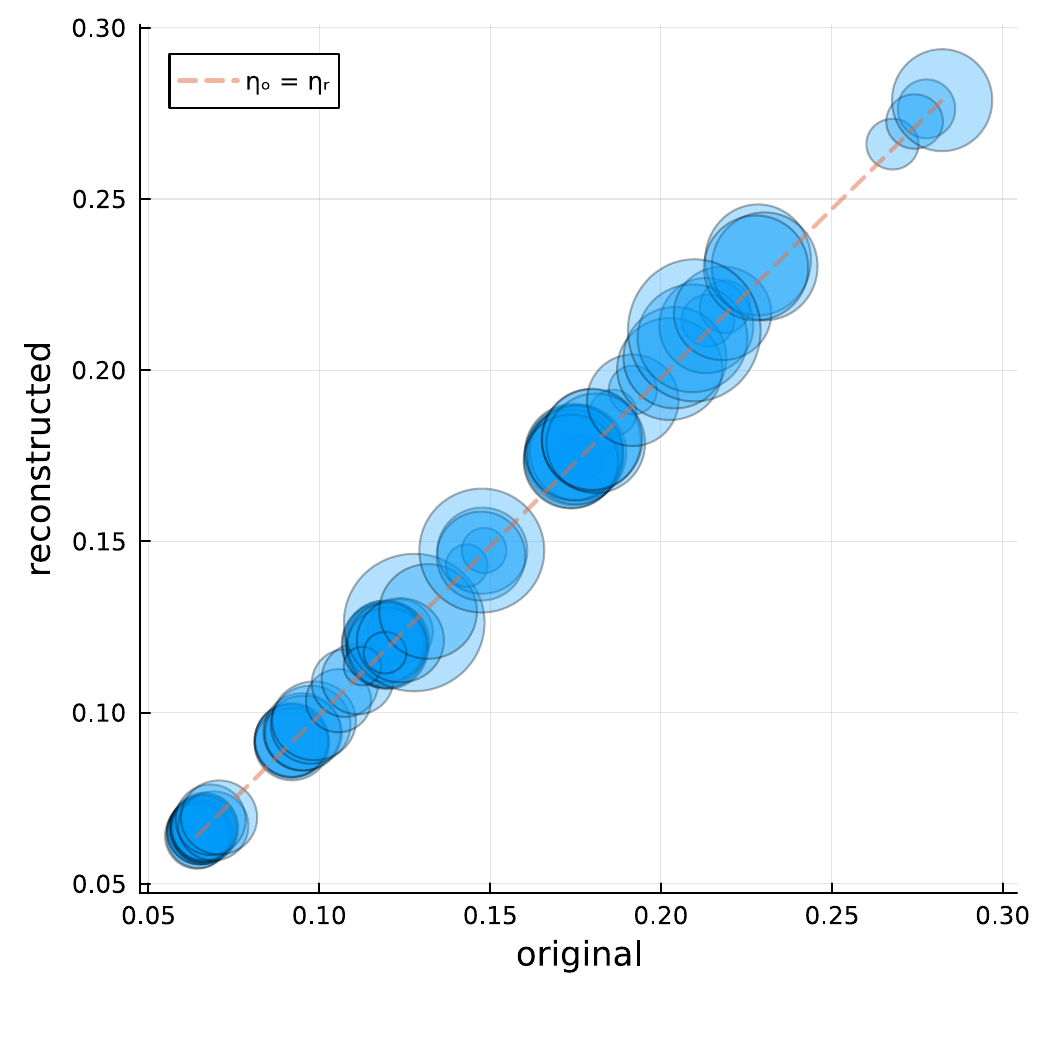}
	\caption{Incremental vs. original.}
	\label{fig:rats-original-vs-incremental}
\end{subfigure} 
\begin{subfigure}[c]{0.54\linewidth}
	\centering
	\includegraphics[width=0.85\linewidth]{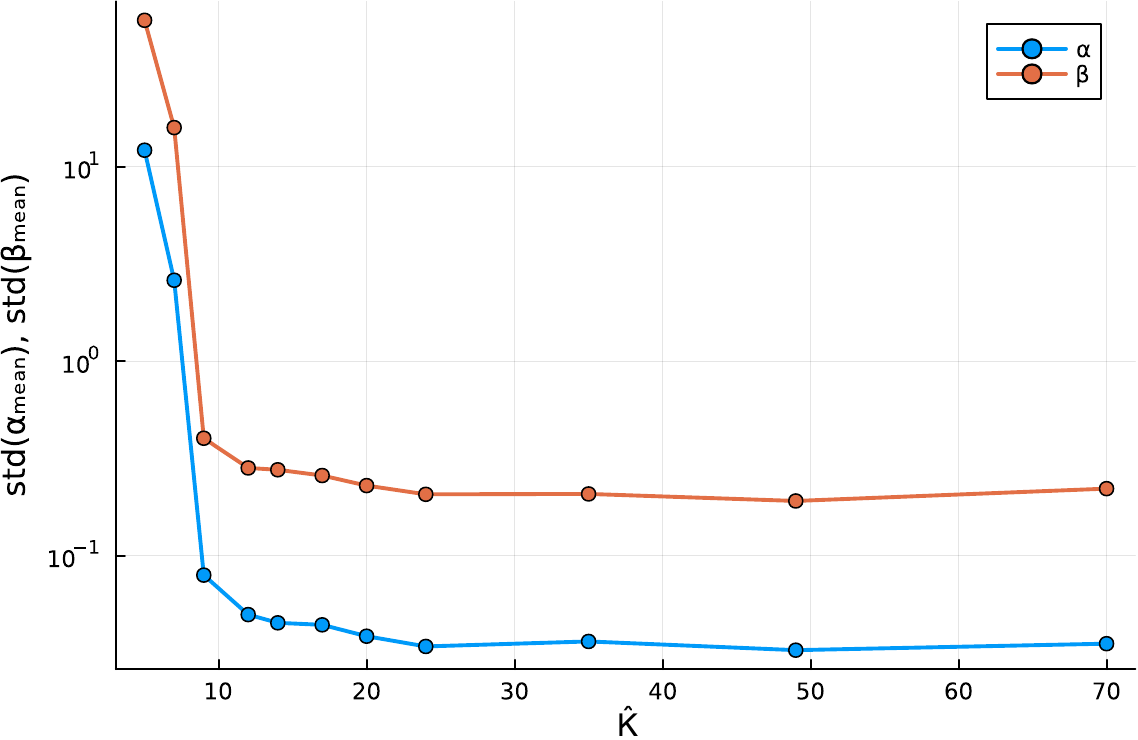}
	\caption{Subsampling of groups.}
	\label{fig:rats-approximation-error}
\end{subfigure}
\caption{Rats: incremental posteriors.}
\end{figure}

\subsection{Educational Attainment in Secondary Schools}

We conclude our case studies with an analysis of a 
data are on 3435 children who attended 19 secondary schools in
Scotland~\citep{P91}. There are 11 fields in the original data
set of which the following are used:
\begin{itemize}
\item SID --- secondary school identifying code.
\item SEX --- pupil's gender.
\item SC --- pupil's social class scale.
\item VRQ --- a verbal reasoning score on entry to the secondary
school.
\item ATTAIN --- pupil's attainment score at age 16.
\end{itemize}

We fit the following mixed effects model: 
\begin{equation}
\begin{aligned}
\mu_{0\beta}, \mu_{0\zeta} & \sim \mathrm{Normal}(0, 5) \\
\sigma_{0\beta}, \sigma_{0\zeta} & \sim \mathrm{Cauchy}(0, 5) \\
\beta & \sim \mathrm{Normal(\mu_{0\beta}, \sigma_{0\beta})} \\
\zeta^2 & \sim \mathrm{LogNormal}(\mu_{0\zeta}, \sigma_{0\zeta}) \\
y & \sim  \mathrm{Normal}(\beta \oplus u, \sqrt{\zeta^2 \oplus v})
\end{aligned}
\label{eqn:attain}
\end{equation}
Here, $\mu_{0,\beta} \in \mathcal{R}^4, \mu_{0, \zeta} \in
\mathcal{R}, \sigma_{0, \beta} \in \mathcal{R}^4,
\sigma_{0, \zeta} \in \mathcal{R}$ are the
hyperparameters. $\beta \in \mathcal{R}^{19 \times 4}$ and
$\zeta^2 \in \mathcal{R}^{19 \times 1}$ are the regression
coefficients for the mean and the variance of the conditional
distributions of observations $y$.  $u$ and $v$ are explanatory
variables. We omit tensor indices to avoid clutter.

This is both a much larger dataset and more elaborated model
than in the two previous studies. The purpose of this case study
is to demonstrate that our approach can handle larger data,
regression, and mixed effects. We apply
Algorithm~\ref{alg:multi-level} to Model~\eqref{eqn:attain} with
$\hat M=25$ and $\hat K = K^* = 19$. The original and the
reconstructed hyperparameter posteriors are shown in
Figure~\ref{fig:attain-wvo-posterior}. The reconstructed
marginal posteriors are fairly close to the original ones.
\begin{figure}
	\centering
	\includegraphics[width=0.6\linewidth]{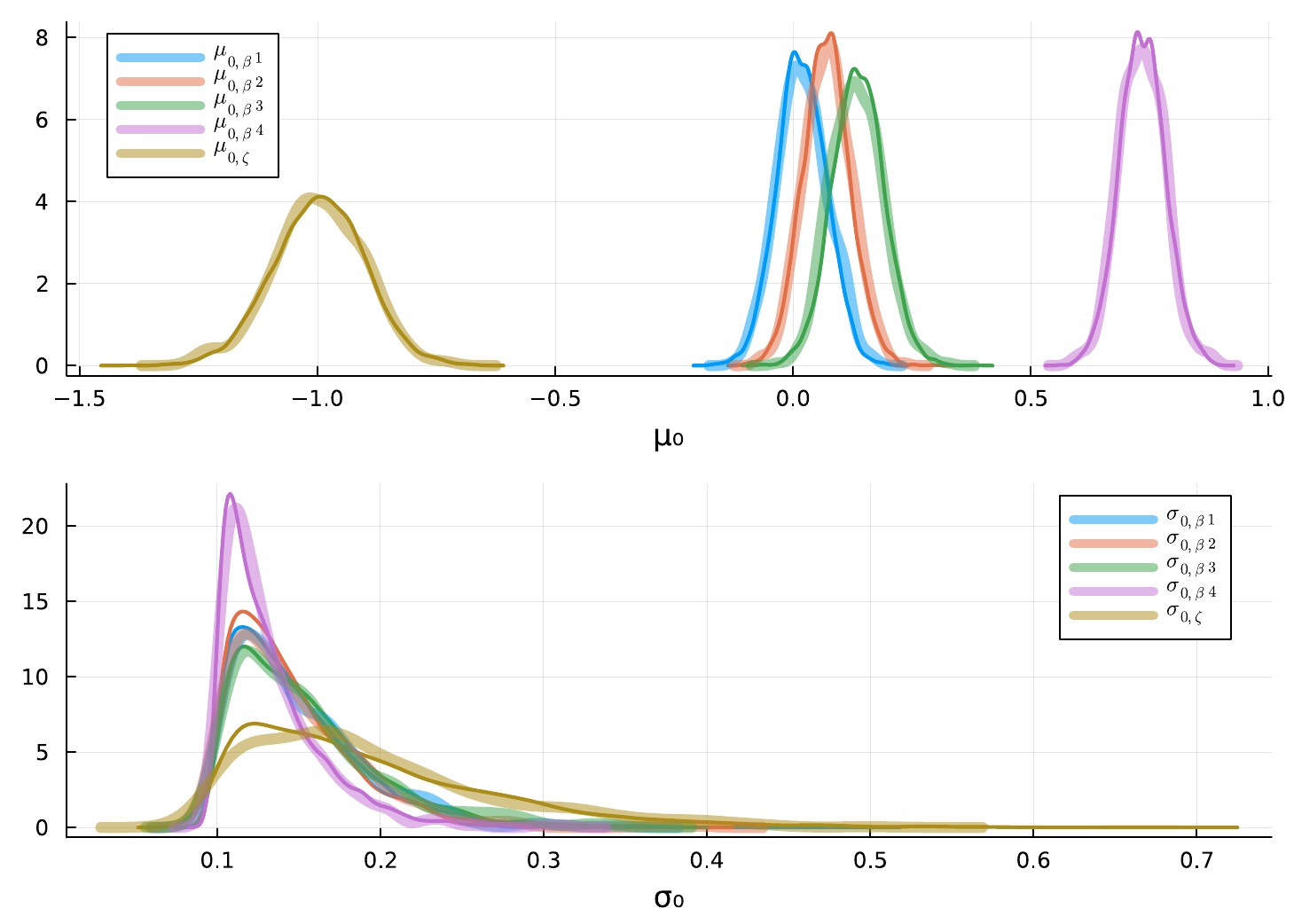}
	\caption{Attainment: hyperparameter posteriors: thin opaque
	curves are the original posteriors, broad semitransparent
	curves are the reconstructed ones.}
	\label{fig:attain-wvo-posterior}
\end{figure}
This case study also lets us illustrate both data compression and
inference speedup due to the use of weighted virtual
observations. First, we replace $\mathbf{3435}$ original observations with
$\hat M \cdot \hat K = \mathbf{475}$ weighted virtual observations, a
\textbf{6-fold} decrease in the number of observations. In addition, for
incremental inference on a new secondary school, we get a
$\mathbf{15}$-dimensional model instead of $\mathbf{105}$-dimensional model on
the original observations. As a result, the inference on the
original model takes $\approx \mathbf{1830}$ seconds on our computer, while the
inference on a single school and weighted virtual observations
reconstructing the posterior on the rest of the schools only
takes $\approx \mathbf{240}$ seconds, an almost \textbf{8-fold}
decrease in the running time.

\section{Related Work}
\label{sec:related}

The importance of learning from data is well
appreciated in probabilistic programming. Along with empirical
Bayes, applicable to probabilistic programming as well as to
Bayesian generative models in general, probabilistic-programming
specific approaches were proposed. One possibility is to induce
a probabilistic program suited for a particular data
set~\citep{LJK10, PW14, P18, HSG11}.  A related but different
research direction is inference compilation~\citep{LBW17,BSB+19},
where the cost of inference is amortized through learning
proposal distributions from data. Another line of research is
concerned by speeding up inference algorithms by tuning them
based on training data~\citep{ETK14,MSH+18}.  Our approach to
learning from data in probabilistic programs is different in
that it does not require any particular implementation of
probabilistic programming to be used, nor introspection into the
structure of probabilistic programs or inference algorithms.

Approximation of a large sample set by a small weighted subset
bears similarity to Bayesian
coresets~\citep{HCB16,CB19,MXM+20,ZKK+21} --- a family of
approaches aiming at speeding up inference with large datasets.
A Bayesian coreset is a small weighted subset of the original
large dataset, with the promise that inference on the coreset
yields the same or approximately the same posterior. However,
there are significant differences between Bayesian coresets and
the setting in this work. First, in Bayesian coresets, the
posterior is unknown when the coreset is constructed. In this
work, the posterior distribution of the hyperparameter is known
(as  a Monte Carlo approximation) before the samples are
selected and the weights are computed.  In particular,
\cite{CB19} minimize the KL divergence between the approximate
and the exact posterior, while this work minimizes the
complementary KL divergence between the exact posterior and its
approximation.  This facilitates a simpler formulation of
divergence minimization. Second, careful selection of samples to
be included in the coreset is necessitated by high
dimensionality of data in the dataset~\citep{MXM+20}. However,
even in elaborated multi-level Bayesian models the group
parameters are low-dimensional, and interdependencies between
multiple hierarchies in cross-classified models are assumed to
be negligible. Because of that, a random draw from the parameter
posterior suffices even if it might not work in a
higher-dimensional setting. 

In the context of privacy protection in distributed computing,
this work is related to federated learning~\cite{WZY+22}, and to
server-side Bayesian federated learning~\citep[Section
3.3]{CCF+23} in particular.  Federated learning aims at training
a machine learning model, with a focus on neural networks, on
multiple datasets contained on different nodes, exchanging model
parameters rather than data samples. Bayesian federated learning
incorporates principles of Bayesian learning into federated
learning framework.  Server-side Bayesian federated learning is
concerned with aggregation of the local posteriors into the
global posterior, and decomposition of the global posterior to
serve as the priors in the local models. The theoretical
foundation and the algorithm introduced in this work may serve
as the basis for a novel approach to the aggregation and
decomposition of the posterior in Bayesian federated learning,
in particular when local models use Markov chain Monte Carlo
methods for inference.

\section{Conclusion}
\label{sec:conclusion}

In this work, we addressed the challenge of efficient
incremental belief updates in probabilistic programming,
particularly when dealing with complex Bayesian statistical
models. Our approach diverges from the commonly employed
empirical Bayes method, which relies on parametric
approximations of the posterior distribution. Instead, we
propose a novel theoretically grounded strategy based on
conditioning the model on a set of weighted virtual
observations. 

We presented a novel algorithm addressing the
challenge of efficient incremental belief updates through the
use of weighted virtual observations. The strength of our
algorithm lies in its ability to seamlessly integrate with
probabilistic programming frameworks, allowing for efficient
belief updates in the face of incremental observations. We
provided a reference implementation of our algorithm,
demonstrating its effectiveness and robustness through a series
of didactic examples and case studies. 

However, it's essential to recognize that our algorithm serves
as a starting point and can be further enhanced through
variations in several key aspects. Future research directions
could explore altering the selection process for virtual
observations, determining the optimal number of virtual
observations, and refining the search strategy for observation
weights. These avenues provide exciting opportunities for
researchers to build upon the foundation laid by our algorithm
and tailor it to specific application domains.

\section*{Acknowledgements}

This research is partially supported by the Israeli Science Foundation grant
1156/22.

\bibliographystyle{plainnat}
\bibliography{refs}

\end{document}